

\documentclass{article}

\usepackage[english]{babel}
\usepackage[utf8]{inputenc}

\usepackage{amsmath}
\usepackage{amssymb}

\usepackage{hyperref}
\usepackage{enumerate}
\usepackage{bbm} 
\usepackage{tikz}
\usepackage{longtable} 

\usepackage{natbib}
\bibliographystyle{plainnat}
\bibpunct{(}{)}{;}{a}{,}{,}

\usepackage{mythm}
\usepackage{aixi}

\DeclareMathOperator*{\xor}{xor} 


\AtBeginDocument{

}
\newcommand{\appendixref}[1]{{\hyperref[#1]{Appendix~\ref*{#1}}}}


\title{Bad Universal Priors and Notions of Optimality}
\author{Jan Leike and Marcus Hutter}


\begin{document}

\maketitle

\begin{abstract}
A big open question of algorithmic information theory is
the choice of the universal Turing machine (UTM).
For Kolmogorov complexity and Solomonoff induction we have invariance theorems:
the choice of the UTM changes bounds only by a constant.
For the universally intelligent agent AIXI~\citep{Hutter:2005}
no invariance theorem is known.
Our results are entirely negative:
we discuss cases
in which unlucky or adversarial choices of the UTM
cause AIXI to misbehave drastically.
We show that
Legg-Hutter intelligence and thus balanced Pareto optimality
is entirely subjective, and that
every policy is Pareto optimal
in the class of all computable environments.
This undermines all existing optimality properties for AIXI.
While it may still serve as a gold standard for AI,
our results imply that
AIXI is a \emph{relative} theory,
dependent on the choice of the UTM.
\end{abstract}

\noindent
{\bf Keywords.}
AIXI,
general reinforcement learning,
universal Turing machine,
Legg-Hutter intelligence,
balanced Pareto optimality,
asymptotic optimality.

\section{Introduction}
\label{sec:introduction}

The choice of the universal Turing machine (UTM) has been a big open question
in algorithmic information theory for a long time.
While attempts have been made~\citep{Mueller:2010}
no answer is in sight.
The \emph{Kolmogorov complexity} of a string,
the length of the shortest program that prints this string,
depends on this choice.
However, there are \emph{invariance theorems}~\cite[Thm.\ 2.1.1 \& Thm.\ 3.1.1]{LV:2008}
which state that changing the UTM
changes Kolmogorov complexity only by a constant.
When using the \emph{universal prior} $M$
introduced by \cite{Solomonoff:1964,Solomonoff:1978}
to predict any deterministic computable binary sequence,
the number of wrong predictions is bounded by (a multiple of)
the Kolmogorov complexity of the sequence~\citep{Hutter:2001error}.
Due to the invariance theorem,
changing the UTM
changes the number of errors only by a constant.
In this sense, compression and prediction work for any choice of UTM.

\cite{Hutter:2000,Hutter:2005} defines the universally intelligent agent AIXI,
which is targeted at
the \emph{general reinforcement learning problem}~\citep{SB:1998}.
It extends Solomonoff induction to the interactive setting.
AIXI is a Bayesian agent,
using a universal prior on the set of all computable environments;
actions are taken according to
the maximization of expected future discounted rewards.
Closely related is the intelligence measure defined by \cite{LH:2007int},
a mathematical performance measure for general reinforcement learning agents:
defined as
the discounted rewards achieved across all computable environments,
weighted by the universal prior.

There are several known positive results about AIXI.
It has been proven to be
\emph{Pareto optimal}~\cite[Thm.\ 2 \& Thm.\ 6]{Hutter:2002},
\emph{balanced Pareto optimal}~\cite[Thm.\ 3]{Hutter:2002}, and
has maximal Legg-Hutter intelligence.
Furthermore,
AIXI asymptotically learns to predict the environment perfectly and with
a small total number of errors analogously to
Solomonoff induction~\cite[Thm.\ 5.36]{Hutter:2005},
but only \emph{on policy}:
AIXI learns to correctly predict the value (expected future rewards)
of its own actions,
but generally not the value of counterfactual actions that it does not take.

\cite{Orseau:2010,Orseau:2013} showed that
AIXI does not achieve asymptotic optimality
in all computable environments.
So instead, we may ask the following weaker questions.
Does AIXI succeed in every partially observable Markov decision process (POMDP)/%
(ergodic) Markov decision process (MDP)/%
bandit problem/%
sequence prediction task?
In this paper we show that without further assumptions on the UTM,
we cannot answer any of the preceding questions in the affirmative.
More generally, there can be no invariance theorem for AIXI.
As a reinforcement learning agent,
AIXI has to balance between exploration and exploitation.
Acting according to any (universal) prior
does not lead to enough exploration, and
the bias of AIXI's prior is retained indefinitely.
For bad priors this can cause serious malfunctions.
However, this problem can be alleviated
by adding an extra exploration component to AIXI~\cite[Ch.\ 5]{Lattimore:2013},
similar to knowledge-seeking agents~\citep{Orseau:2014ksa,OLH:2013ksa},
or by the use of optimism~\citep{SH:12optopt}.

In \autoref{sec:bad-universal-priors} we give two examples of universal priors
that cause AIXI to misbehave drastically.
In case of a finite lifetime,
the \emph{indifference prior} makes all actions equally preferable to AIXI
(\autoref{ssec:indifference-prior}).
Furthermore, for any computable policy $\pi$
the \emph{dogmatic prior}
makes AIXI stick to the policy $\pi$ as long as
expected future rewards do not fall too close to zero
(\autoref{ssec:dogmatic-prior}).
%
This has profound implications.
We show in \autoref{sec:intelligence}
that if we measure Legg-Hutter intelligence with respect to
a \emph{different} universal prior,
AIXI scores arbitrarily close to the minimal intelligence while
any computable policy can score arbitrarily close to the maximal intelligence.
This makes the Legg-Hutter intelligence score
and thus balanced Pareto optimality
relative to the choice of the UTM.
Moreover,
in \autoref{sec:Pareto-optimality} we show that
in the class of all computable environments,
\emph{every} policy is Pareto optimal.
This undermines all existing optimality results for AIXI.
We discuss the implications of these results for
the quest for a \emph{natural} universal Turing machine and
optimality notions of general reinforcement learners
in \autoref{sec:discussion}.
A list of notation is provided in \appendixref{app:notation}.

\section{Preliminaries and Notation}
\label{sec:preliminaries}

The set $\X^* := \bigcup_{n=0}^\infty \X^n$ is
the set of all finite strings over the alphabet $\X$,
the set $\X^\infty$ is
the set of all infinite strings
over the alphabet $\X$, and
the set $\X^\sharp := \X^* \cup \X^\infty$ is their union.
The empty string is denoted by $\epsilon$, not to be confused
with the small positive real number $\varepsilon$.
Given a string $x \in \X^\sharp$, we denote its length by $|x|$.
For a (finite or infinite) string $x$ of length $\geq k$,
we denote with $x_{1:k}$ the first $k$ characters of $x$,
and with $x_{<k}$ the first $k - 1$ characters of $x$.
The notation $x_{1:\infty}$ stresses that $x$ is an infinite string.
We write $x \sqsubseteq y$ iff $x$ is a prefix of $y$, i.e.,
$x = y_{1:|x|}$.

In reinforcement learning,
the agent interacts with an environment in cycles:
at time step $t$ the agent chooses an \emph{action} $a_t \in \A$ and
receives a \emph{percept} $e_t = (o_t, r_t) \in \E$
consisting of an \emph{observation} $o_t \in \O$
and a real-valued \emph{reward} $r_t \in \mathbb{R}$;
the cycle then repeats for $t + 1$.
A \emph{history} is an element of $\H$.
We use $\ae \in \A \times \E$ to denote one interaction cycle,
and $\ae_{<t}$ to denote a history of length $t - 1$.
The goal in reinforcement learning is
to maximize total discounted rewards.
A \emph{policy} is a function $\pi: \H \to \A$
mapping each history to the action taken after seeing this history.
A history $\ae_{<t}$ is \emph{consistent with policy $\pi$} iff
$\pi(\ae_{<k}) = a_k$ for all $k < t$.

A function $f: \X^* \to \mathbb{R}$ is \emph{lower semicomputable} iff
the set $\{ (x, q) \in \X^* \times \mathbb{Q} \mid f(x) > q \}$
is recursively enumerable.
%
A \emph{conditional semimeasure} $\nu$ is
a probability measure over finite and infinite strings of percepts
given actions as input where
$\nu(e_{<t} \dmid a_{1:\infty})$ denotes
the probability of receiving percepts $e_{<t}$
when taking actions $a_{1:\infty}$.
Formally, $\nu$ maps $\A^\infty$ to
a probability distribution over $\E^\sharp$.
Thus the environment might assign positive probability to
finite percept sequences.
One possible interpretation for this is that
there is a non-zero chance that the environment ends:
it simply does not produce a new percept.
Another possible interpretation is that
there is a non-zero chance of death for the agent.
However, nothing hinges on the interpretation;
the use of (unnormalized) semimeasures
is primarily a technical trick.

The conditional semimeasure $\nu$ is \emph{chronological} iff
the first $t - 1$ percepts are independent of
future actions $a_k$ for $k \geq t$, i.e.,
$\nu(e_{<t} \dmid a_{1:k}) = \nu(e_{<t} \dmid a_{<t})$.
Despite their name,
conditional semimeasures do not denote a conditional probability;
$\nu$ is not a joint probability distribution over actions and percepts.
%
We model environments as
lower semicomputable chronological conditional semimeasures (LSCCCS)%
~\cite[Sec.\ 5.1.1]{Hutter:2005};
the class of all such environments is denoted as $\Mlscccs$.
We also use the larger set
of all chronological conditional semimeasures $\Mccs$.

A \emph{universal prior} is
a function $w: \Mlscccs \to [0, 1]$ such that
$w_\nu := w(\nu) > 0$ for all $\nu \in \Mlscccs$ and
$\sum_{\nu \in \Mlscccs} w_\nu \leq 1$.
A universal prior $w$ gives rise to a \emph{universal mixture},
\begin{equation}\label{eq:xi-mixture}
     \xi(e_{<t} \dmid a_{<t})
~:=~ \sum_{\nu \in \Mlscccs} w_\nu \nu(e_{<t} \dmid a_{<t}).
\end{equation}
If the universal prior is lower semicomputable,
then the universal mixture $\xi$ is an LSCCCS,
i.e., $\xi \in \Mlscccs$.
From a given universal monotone Turing machine $U$~\cite[Sec.\ 4.5.2]{LV:2008}
we can get a universal mixture $\xi$ in two ways.
First, we can use \eqref{eq:xi-mixture} with the prior given by
$w_\nu := 2^{-K(\nu)}$, where $K(\nu)$ is
the \emph{Kolmogorov complexity} of $\nu$'s index
in the enumeration of all LSCCCSs~\cite[Eq.\ 4.11]{LV:2008}.
Second, we can define it as the probability that
the universal monotone Turing machine $U$ generates $e_{<t}$
when fed with $a_{<t}$ and uniformly random bits:
\begin{equation}\label{eq:xi-enumeration}
     \xi(e_{<t} \dmid a_{<t})
~:=~ \sum_{p:\, e_{<t} \sqsubseteq U(p, a_{<t})} 2^{-|p|}
\end{equation}
Both definitions are equivalent,
but not necessarily equal~\cite[Lem.\ 10 \& Lem.\ 13]{WSH:2011}.

\begin{lemma}[Mixing Mixtures]
\label{lem:mixing-mixtures}
Let $q, q' \in \mathbb{Q}$ such that
$q > 0$, $q' \geq 0$, and $q + q' \leq 1$.
Let $w$ be any lower semicomputable universal prior,
let $\xi$ be the universal mixture for $w$, and
let $\rho$ be an LSCCCS.
Then $\xi' := q\xi + q'\rho$ is an LSCCCS and a universal mixture.
\end{lemma}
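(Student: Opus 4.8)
The plan is to verify the three required properties of $\xi'$ in turn: that it is a chronological conditional semimeasure, that it is lower semicomputable, and that it arises as a universal mixture from some lower semicomputable universal prior. First I would check that $\xi' = q\xi + q'\rho$ is a chronological conditional semimeasure. Both $\xi$ and $\rho$ are chronological conditional semimeasures, and a convex-type combination with nonnegative coefficients preserves chronology (the defining identity $\nu(e_{<t} \dmid a_{1:k}) = \nu(e_{<t} \dmid a_{<t})$ is linear in $\nu$), preserves nonnegativity, and keeps the total mass at most $1$ since $\sum_{e_t} \xi'(e_{<t}e_t \dmid a_{1:t}) \le q \cdot 1 + q' \cdot 1 \le q + q' \le 1$ by the semimeasure property of $\xi$ and $\rho$. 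Lower semicomputability of $\xi'$ follows because $q,q' \in \mathbb{Q}$ are computable constants and a nonnegative rational combination of lower semicomputable functions is lower semicomputable; hence $\xi' \in \Mlscccs$.

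The substantive part is to exhibit $\xi'$ as a universal mixture, i.e. to produce a lower semicomputable universal prior $w'$ on $\Mlscccs$ whose induced mixture via \eqref{eq:xi-mixture} equals $\xi'$. The natural idea is to take the prior $w$ underlying $\xi$ and redistribute its weight so that the $\rho$-contribution is boosted. Writing $\xi = \sum_{\nu} w_\nu \nu$, I would set $w'_\nu := q \, w_\nu$ for every $\nu \neq \rho$ and $w'_\rho := q \, w_\rho + q'$, so that
\begin{equation*}
     \sum_{\nu \in \Mlscccs} w'_\nu \nu
~=~ q \sum_{\nu} w_\nu \nu + q' \rho
~=~ q \xi + q' \rho
~=~ \xi'.
\end{equation*}
Here I am using that $\rho \in \Mlscccs$, so $\rho$ already appears in the enumeration and $w'_\rho$ is well defined. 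I must then check that $w'$ is genuinely a universal prior: positivity holds because $w'_\nu = q w_\nu > 0$ for $\nu \neq \rho$ (as $q > 0$ and $w_\nu > 0$) and $w'_\rho = q w_\rho + q' \geq q w_\rho > 0$; and the mass bound holds since $\sum_\nu w'_\nu = q \sum_\nu w_\nu + q' \leq q + q' \leq 1$. Finally, lower semicomputability of $w'$ follows from that of $w$ together with the fact that $q, q'$ are rational, once we note that $w$'s index of $\rho$ in the enumeration is a fixed (computable) quantity, so the single modified coordinate $w'_\rho$ is still lower semicomputable.

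I expect the main obstacle to be a subtle indexing or well-definedness issue rather than the arithmetic: one must be careful that $\rho$ occupies a single well-defined slot in the enumeration of $\Mlscccs$ so that ``add $q'$ to $w_\rho$'' makes sense, and that modifying exactly one coordinate of a lower semicomputable function preserves lower semicomputability uniformly. A secondary point worth spelling out is the equivalence being used: the lemma asserts $\xi'$ is \emph{a} universal mixture, and since $w'$ is lower semicomputable, the remark preceding the lemma guarantees the induced mixture is itself an LSCCCS, which is consistent with the first part. I would present the construction of $w'$ explicitly and then discharge the three defining conditions of a universal prior as short verifications.
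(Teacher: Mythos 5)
Your proposal is correct and matches the paper's proof, which is the one-line observation that $\xi'$ arises from the prior $w' := qw + q'\mathbbm{1}_\rho$ --- exactly the weight redistribution you construct coordinate-wise. You simply spell out the verifications (positivity, mass bound, lower semicomputability, and the LSCCCS property of $\xi'$) that the paper leaves implicit.
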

\begin{proof}
$\xi'$ is given by the universal prior $w'$ with $w' := qw + q'\mathbbm{1}_\rho$.
\end{proof}

Throughout this paper, we make the following assumptions.

\begin{assumption}\label{ass:aixi}
\begin{enumerate}[(a)]
\item \label{ass:bounded-rewards}
	Rewards are bounded between $0$ and $1$.
\item \label{ass:finite-actions-and-percepts}
	The set of actions $\A$ and the set of percepts $\E$
	are both finite.
\end{enumerate}
\end{assumption}

We fix a \emph{discount function}
$\gamma: \mathbb{N} \to \mathbb{R}$ with
$\gamma_t := \gamma(t) \geq 0$ and $\sum_{t=1}^\infty \gamma_t < \infty$.
The \emph{discount normalization factor} is defined as
$\Gamma_t := \sum_{i=t}^\infty \gamma_i$.
There is no requirement that $\gamma_t > 0$ or $\Gamma_t > 0$.
If $m := \min \{ t \mid \Gamma_{t+1} = 0 \}$ exists,
we say the agent has a \emph{finite lifetime $m$} and
does not care what happens afterwards.

\begin{definition}[Value Function]
\label{def:discounted-value}
The \emph{value} of a policy $\pi$ in an environment $\nu$
given history $\ae_{<t}$ is defined as
$V^\pi_\nu(\ae_{<t}) := V^\pi_\nu(\ae_{<t} \pi(\ae_{<t}))$ and
\begin{align*}
      V^\pi_\nu(\ae_{<t} a_t)
~&:=~ \frac{1}{\Gamma_t} \sum_{e_t \in \E}
        \big( \gamma_t r_t + \Gamma_{t+1} V^\pi_\nu(\ae_{1:t}) \big)
        \nu(e_{1:t} \mid e_{<t} \dmid a_{1:t})
\end{align*}
if $\Gamma_t > 0$ and $V^\pi_\nu(\ae_{<t}) := 0$ if $\Gamma_t = 0$.
The \emph{optimal value} is defined as
$V^*_\nu(h) := \sup_\pi V^\pi_\nu(h)$.
\end{definition}

\begin{definition}[{Optimal Policy~\cite[Def.\ 5.19 \& 5.30]{Hutter:2005}}]
\label{def:optimal-policy}
A policy $\pi$ is \emph{optimal in environment $\nu$ ($\nu$-optimal)} iff
for all histories
$\pi$ attains the optimal value:
$V^\pi_\nu(h) = V^*_\nu(h)$
for all $h \in \H$.
The action $\pi(h)$ is an \emph{optimal action} iff
$\pi(h) = \pi^*_\nu(h)$ for some $\nu$-optimal policy $\pi^*_\nu$.
\end{definition}

Formally,
AIXI is defined as a policy $\pi^*_\xi$
that is optimal in the universal mixture $\xi$.
Since there can be more than one $\xi$-optimal policy,
this definition is not unique.
If there two optimal actions $\alpha \neq \beta \in \A$,
we call it an \emph{argmax tie}.
Which action we take in case of a tie (how we break the tie)
is irrelevant and can be arbitrary.
We assumed that
the discount function is summable,
rewards are bounded
(\hyperref[ass:bounded-rewards]%
{Assumption~\ref*{ass:aixi}\ref*{ass:bounded-rewards}}), and
actions and percepts spaces are both finite
(\hyperref[ass:finite-actions-and-percepts]%
{Assumption~\ref*{ass:aixi}\ref*{ass:finite-actions-and-percepts}}).
Therefore an optimal policy exists
for every environment $\nu \in \Mlscccs$~\cite[Thm.\ 10]{LH:2014discounting},
in particular for any universal mixture $\xi$.

\begin{lemma}[{Discounted Values~\cite[Lem.\ 2.5]{Lattimore:2013}}]
\label{lem:discounted-values}
If two policies $\pi_1$ and $\pi_2$ coincide for the first $k$ steps
($\pi_1(\ae_{<t}) = \pi_2(\ae_{<t})$
for all histories $\ae_{<t}$ consistent with $\pi_1$ and $t \leq k$),
then
\[
       \big| V^{\pi_1}_\nu(\epsilon) - V^{\pi_2}_\nu(\epsilon) \big|
~\leq~ \frac{\Gamma_{k+1}}{\Gamma_1}
\text{ for all environments $\nu \in \Mccs$}.
\]
\end{lemma}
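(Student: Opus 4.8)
The plan is to prove the stronger claim that for \emph{every} history $\ae_{<s}$ that is consistent with $\pi_1$ and has length $s - 1 < k$, one has $|V^{\pi_1}_\nu(\ae_{<s}) - V^{\pi_2}_\nu(\ae_{<s})| \leq \Gamma_{k+1}/\Gamma_s$, and then read off the lemma by specializing to $s = 1$. The guiding intuition is that unfolding \autoref{def:discounted-value} expresses $\Gamma_1 V^\pi_\nu(\epsilon)$ as an expected discounted reward $\mathbb{E}^\pi_\nu[\sum_{t\geq 1}\gamma_t r_t]$; since $\pi_1$ and $\pi_2$ prescribe the same actions on the first $k$ cycles and $\nu$ is the same environment, the first $k$ summands have identical expectations and cancel in the difference, leaving a tail that is bounded by $\sum_{t\geq k+1}\gamma_t = \Gamma_{k+1}$ because rewards lie in $[0,1]$. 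Rather than justify this infinite unfolding, I would carry out the argument by backward induction on $s$ directly on the recursion, which is cleaner and interacts gracefully with the fact that $\nu$ is only a semimeasure.

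For the induction, the base case $s = k+1$ holds because every value lies in $[0,1]$ (rewards are bounded by \hyperref[ass:bounded-rewards]{Assumption~\ref*{ass:aixi}\ref*{ass:bounded-rewards}} and the $\Gamma_t$ normalization), so the difference is at most $1 = \Gamma_{k+1}/\Gamma_{k+1}$. For the inductive step at a history $\ae_{<s}$ of length $s - 1 < k$, both policies prescribe the same action $a_s := \pi_1(\ae_{<s}) = \pi_2(\ae_{<s})$, so writing $V^{\pi_i}_\nu(\ae_{<s}) = V^{\pi_i}_\nu(\ae_{<s}a_s)$ via \autoref{def:discounted-value}, the immediate-reward contributions $\gamma_s r_s$ are weighted by the same $\nu(e_{1:s}\mid e_{<s}\dmid a_{1:s})$ and cancel, leaving
\[
V^{\pi_1}_\nu(\ae_{<s}) - V^{\pi_2}_\nu(\ae_{<s}) = \frac{\Gamma_{s+1}}{\Gamma_s}\sum_{e_s\in\E}\big(V^{\pi_1}_\nu(\ae_{1:s}) - V^{\pi_2}_\nu(\ae_{1:s})\big)\,\nu(e_{1:s}\mid e_{<s}\dmid a_{1:s}).
\]
Applying the inductive hypothesis $|V^{\pi_1}_\nu(\ae_{1:s}) - V^{\pi_2}_\nu(\ae_{1:s})| \leq \Gamma_{k+1}/\Gamma_{s+1}$ to each summand and using that $\nu$ is a semimeasure, so $\sum_{e_s}\nu(e_{1:s}\mid e_{<s}\dmid a_{1:s}) \leq 1$, the factor $\Gamma_{s+1}$ cancels and the bound $\Gamma_{k+1}/\Gamma_s$ follows.

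I expect the only genuine subtlety to be the degenerate discounting cases that the preliminaries explicitly permit ($\gamma_t = 0$ or $\Gamma_t = 0$). When $\Gamma_{s+1} = 0$ the future-value terms vanish by definition and $\Gamma_{k+1} \leq \Gamma_{s+1} = 0$, so both sides are $0$; when $\Gamma_s = 0$ the value is defined to be $0$ and the inequality is vacuous. I would therefore dispose of these cases separately before invoking the cancellation $\Gamma_{s+1}/\Gamma_{s+1} = 1$. The second point needing care is that the semimeasure mass inequality must be used in the correct direction — it can only \emph{shrink} the tail — which is precisely why the one-sided estimate $\sum_{e_s}\nu \leq 1$ suffices in place of an equality. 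Everything else is routine, and setting $s = 1$ yields the stated bound.
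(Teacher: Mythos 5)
Your proof is correct and takes essentially the same route as the paper: the paper unrolls the value recursion $k$ steps at once, cancels the identical expected rewards of the first $k$ steps, and bounds the tail by $\Gamma_{k+1}/\Gamma_1$ using the semimeasure property and $0 \leq V^\pi_\nu \leq 1$. Your backward induction on $s$ is a step-by-step formalization of exactly that unrolling (with the same cancellation and the same two bounding ingredients), and additionally handles the degenerate cases $\Gamma_s = 0$ and $\Gamma_{s+1} = 0$ explicitly, which the paper's one-shot argument leaves implicit.
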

\begin{proof}
Since the policies $\pi_1$ and $\pi_2$ coincide for the first $k$ steps,
they produce the same expected rewards for the first $k$ steps.
Therefore
\begin{align*}
        \big| V^{\pi_1}_\nu(\epsilon) - V^{\pi_2}_\nu(\epsilon) \big|
~&=~    \left| \sum_{e_{1:k}} \frac{\Gamma_{k+1}}{\Gamma_1} \big(
          V^{\pi_1}_\nu(\ae_{1:k}) - V^{\pi_2}_\nu(\ae_{1:k})
        \big) \nu(e_{1:k} \dmid a_{1:k}) \right| \\
~&\leq~ \sum_{e_{1:k}} \frac{\Gamma_{k+1}}{\Gamma_1} \big|
          V^{\pi_1}_\nu(\ae_{1:k}) - V^{\pi_2}_\nu(\ae_{1:k})
        \big| \nu(e_{1:k} \dmid a_{1:k})
~\leq~  \frac{\Gamma_{k+1}}{\Gamma_1},
\end{align*}
where $a_t := \pi_1(\ae_{<t}) = \pi_2(\ae_{<t})$ for all $t \leq k$.
The last inequality follows
since $\nu$ is a semimeasure, $0 \leq V^\pi_\nu \leq 1$ and hence
$|V^{\pi_1}_\nu(\ae_{1:k}) - V^{\pi_2}_\nu(\ae_{1:k})| \leq 1$.
\end{proof}

\section{Bad Universal Priors}
\label{sec:bad-universal-priors}

\subsection{The Indifference Prior}
\label{ssec:indifference-prior}

In this section
we consider AIXI with a finite lifetime $m$, i.e., $\Gamma_{m+1} = 0$.
The following theorem constructs the \emph{indifference prior},
a universal prior $\xi'$ that
causes argmax ties for the first $m$ steps.
Since we use a discount function that only cares about the first $m$ steps,
all policies are $\xi'$-optimal policies.
Thus AIXI's behavior only depends on how we break argmax ties.

\begin{theorem}[Indifference Prior]
\label{thm:indifference-prior}
If there is an $m$ such that $\Gamma_{m+1} = 0$,
then there is a universal mixture $\xi'$ such that
all policies are $\xi'$-optimal.
\end{theorem}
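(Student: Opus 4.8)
The plan is to build a universal mixture $\xi'$ whose response to percepts during the first $m$ steps does not depend at all on the actions taken, and then to exploit $\Gamma_{m+1}=0$ to conclude that the value of every policy is the same. The intuition is that an agent can only prefer one action over another if its choice can influence the rewards it expects to collect before time $m$; if the conditional percept distribution $\xi'(e_{1:t}\dmid a_{1:t})$ is literally independent of $a_{1:t}$ for every $t\le m$, then no action is ever informative or profitable, and all argmax ties occur. A first guess---symmetrising rewards via the involution $(o,r)\mapsto(o,1-r)$ so that the prior expected reward becomes $1/2$---does make the \emph{first} action indifferent, but it fails for $m\ge 2$: after observing $r_1$ the agent can infer which reward-labelling it faces and steer $a_2$ accordingly. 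Hence I aim for genuine action-independence over the whole horizon, which is strictly stronger than reward-symmetry.

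To construct such a $\xi'$, I would start from any lower semicomputable universal mixture $\xi=\sum_\nu w_\nu\nu$ and average out the action labels step by step. For a tuple of permutations $\vec\sigma=(\sigma_1,\dots,\sigma_m)\in\mathrm{Sym}(\A)^m$ let $\nu^{\vec\sigma}$ be the environment that responds to $a_{1:t}$ exactly as $\nu$ responds to $\sigma_1(a_1)\sigma_2(a_2)\cdots$ (applying $\sigma_t$ at step $t\le m$ and the identity afterwards); relabelling actions by a computable permutation preserves the LSCCCS property, so $\nu^{\vec\sigma}\in\Mlscccs$. Define $\xi' := |\mathrm{Sym}(\A)|^{-m}\sum_{\vec\sigma}\xi^{\vec\sigma}$. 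By the same reasoning as \autoref{lem:mixing-mixtures}, this finite convex combination is again an LSCCCS and a universal mixture: expanding $\xi^{\vec\sigma}=\sum_\nu w_\nu\nu^{\vec\sigma}$ gives $\xi'=\sum_\mu w'_\mu\mu$ with $w'_\mu\ge |\mathrm{Sym}(\A)|^{-m}w_\mu>0$ (take $\nu=\mu$ with the identity permutation) and $\sum_\mu w'_\mu=\sum_\nu w_\nu\le 1$. The point of averaging \emph{independently} at each step is that, for every $t\le m$, uniform averaging of $\nu(e_{1:t}\dmid \sigma_1(a_1)\cdots\sigma_t(a_t))$ over $\sigma_1,\dots,\sigma_t$ equals $|\A|^{-t}\sum_{a'_{1:t}}\nu(e_{1:t}\dmid a'_{1:t})$, and chronologicity makes $\sigma_{t+1},\dots,\sigma_m$ irrelevant for a length-$t$ response. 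Consequently $\xi'(e_{1:t}\dmid a_{1:t})$ is a function of $e_{1:t}$ alone for all $t\le m$.

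With action-independence in hand, the optimality claim follows by backward induction from $t=m$ down to $t=1$. Writing $\xi'(e_{1:t}\dmid a_{1:t})=:h(e_{1:t})$, the one-step transition $\xi'(e_t\mid e_{<t}\dmid a_{1:t})=h(e_{1:t})/h(e_{<t})$ is action-independent for $t\le m$. By \autoref{def:discounted-value} and $\Gamma_{m+1}=0$, the value $V^\pi_{\xi'}(\ae_{1:m})$ at any length-$m$ history vanishes, and at step $m$ the quantity $V^\pi_{\xi'}(\ae_{<m}a_m)=\frac{\gamma_m}{\Gamma_m}\sum_{e_m}r_m\,h(e_{1:m})/h(e_{<m})$ depends only on $e_{<m}$. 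Propagating this up the recursion, each $V^\pi_{\xi'}(\ae_{<t})$ turns out to be a function of $e_{<t}$ only---independent both of the actions recorded in the history and of the policy's future choices. In particular $V^\pi_{\xi'}(\epsilon)$ is identical for every policy $\pi$, so every policy attains $V^*_{\xi'}$ and is $\xi'$-optimal.

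The main obstacle is the tension just described: a universal mixture must assign positive weight to every action-sensitive environment, so it can never be globally action-independent, yet exact argmax ties demand exact action-independence. The step I expect to require the most care is recognising that indifference is only needed on the finite horizon $t\le m$, where there are finitely many histories, and that per-step permutation averaging buys precisely this finite-horizon action-independence while the identity permutation keeps the prior strictly positive everywhere. Verifying that $\xi'$ remains a bona fide lower semicomputable universal mixture---positivity of $w'$, summability, and that relabelling preserves the LSCCCS property---is routine bookkeeping, but it is where the construction must be pinned down carefully.
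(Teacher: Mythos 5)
Your proof is correct, and it reaches the paper's key insight---average out the action labels over the first $m$ steps so that percepts become independent of actions, then invoke $\Gamma_{m+1}=0$---by a genuinely different construction. The paper implements the averaging at the level of the universal Turing machine: it defines $U'(s_{1:m}p, a_{1:t}) := U(p, a_{1:t} \xor s_{1:t})$, so that the first $m$ program bits act as a one-time pad on the actions, and the mixture \eqref{eq:xi-enumeration} induced by $U'$ automatically sums over all $2^m$ pads, yielding $\xi'(e_{1:m}\dmid a_{1:m})$ independent of $a_{1:m}$; for $|\A|>2$ it replaces $\xor$ by a cyclic group operation. You instead symmetrize at the semimeasure level, taking the uniform convex combination of $\xi^{\vec\sigma}$ over $\vec\sigma\in\mathrm{Sym}(\A)^m$ and verifying universality through the prior-weight definition \eqref{eq:xi-mixture} in the style of \autoref{lem:mixing-mixtures} (your weight bound $w'_\mu \ge |\mathrm{Sym}(\A)|^{-m} w_\mu$ via the identity tuple is the right argument, and since each $\vec\sigma$ acts bijectively on $\Mlscccs$, the collected weights indeed sum to at most $1$). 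Your route buys a cleaner measure-theoretic argument that needs no UTM surgery and handles arbitrary finite $\A$ uniformly, since permutations subsume the cyclic-group trick; note that averaging over the translation subgroup $(\mathbb{Z}_{|\A|})^m$ alone would already suffice, which is exactly what the paper's pad does. What the paper's route buys is an explicit \emph{bad UTM} $U'$ realizing the prior, which the authors need downstream for their discussion of natural UTMs and the compiler-size bounds in their table ($K_{U'}(U)=m$ reflects that $U'$ has no programs of length $\le m$)---your $\xi'$ is a legitimate universal mixture for the theorem as stated, but it does not by itself exhibit such a machine. Your backward induction showing that $V^\pi_{\xi'}(\ae_{<t})$ is a policy-independent function of $e_{<t}$ is also slightly more explicit than the paper's one-line appeal to \autoref{lem:discounted-values} plus chronologicity, and both treatments are equally (in)formal about values at $\xi'$-null histories.
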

\begin{proof}
First, we assume that the action space is binary, $\A = \{ 0, 1 \}$.
Let $U$ be the reference UTM and
define the UTM $U'$ by
\[
     U'(s_{1:m}p, a_{1:t})
~:=~ U(p, a_{1:t} \xor s_{1:t}),
\]
where $s_{1:m}$ is a binary string of length $m$ and $s_k := 0$ for $k > m$.
($U'$ has no programs of length $\leq m$.)
Let $\xi'$ be the universal mixture given by $U'$
according to \eqref{eq:xi-enumeration}.
\begingroup
\allowdisplaybreaks
\begin{align*}
     \xi'(e_{1:m} \dmid a_{1:m})
~&=~ \sum_{p:\, e_{1:m} \sqsubseteq U'(p, a_{1:m})} 2^{-|p|} \\
~&=~ \sum_{s_{1:m}p':\, e_{1:m} \sqsubseteq U'(s_{1:m}p', a_{1:m})} 2^{-m-|p'|} \\
~&=~ \sum_{s_{1:m}}~ \sum_{p':\, e_{1:m} \sqsubseteq U(p', a_{1:m} \xor s_{1:m})} 2^{-m-|p'|} \\
~&=~ \sum_{s_{1:m}}~ \sum_{p':\, e_{1:m} \sqsubseteq U(p', s_{1:m})} 2^{-m-|p'|},
\end{align*}
\endgroup
which is independent of $a_{1:m}$.
Hence the first $m$ percepts are independent of the first $m$ actions.
But the percepts' rewards after time step $m$ do not matter
since $\Gamma_{m+1} = 0$ (\autoref{lem:discounted-values}).
Because the environment is chronological,
the value function must be independent of all actions.
Thus every policy is $\xi'$-optimal.

For finite action spaces $\A$ with more than $2$ elements,
the proof works analogously
by making $\A$ a cyclic group and using the group operation instead of $\xor$.
\end{proof}

The choice of $U'$ in the proof of \autoref{thm:indifference-prior}
is \emph{unnatural} since its shortest program has length greater than $m$.
Moreover, the choice of $U'$ depends on $m$.
If we increase AIXI's lifetime
while fixing the UTM $U'$,
\autoref{thm:indifference-prior} no longer holds.
For Solomonoff induction, there is an analogous problem:
when using Solomonoff's prior $M$ to predict a deterministic binary sequence $x$,
we make at most $K(x)$ errors.
In case the shortest program has length $> m$,
there is no guarantee that we make less than $m$ errors.

\subsection{The Dogmatic Prior}
\label{ssec:dogmatic-prior}

In this section
we define a universal prior
that assigns very high probability of going to hell (reward $0$ forever)
if we deviate from a given computable policy $\pi$.
For a Bayesian agent like AIXI,
it is thus only worth deviating from the policy $\pi$
if the agent thinks that the prospects of following $\pi$ are very poor already.
We call this prior the \emph{dogmatic prior},
because the fear of going to hell makes AIXI conform to
any arbitrary `dogmatic ideology' $\pi$.
AIXI will only break out
if it expects $\pi$ to give very low future payoff;
in that case the agent does not have much to lose.

\begin{theorem}[Dogmatic Prior]
\label{thm:dogmatic-prior}
Let $\pi$ be any computable policy,
let $\xi$ be any universal mixture, and
let $\varepsilon > 0$.
There is a universal mixture $\xi'$ such that
for any history $h$ consistent with $\pi$ and $V^\pi_\xi(h) > \varepsilon$,
the action $\pi(h)$ is the unique $\xi'$-optimal action.
\end{theorem}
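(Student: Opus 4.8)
The plan is to build a single \emph{dogmatic environment} $\rho$ that copies $\xi$ as long as the agent obeys $\pi$ but condemns it to ``hell'' (reward $0$ forever) after any deviation, and then to let $\xi'$ place almost all of its weight on $\rho$. Fix a percept $e^0 \in \E$ whose reward is $0$ and define $\rho$ by its conditionals
\[
  \rho(e_{1:t} \mid e_{<t} \dmid a_{1:t})
  ~:=~
  \begin{cases}
    \xi(e_{1:t} \mid e_{<t} \dmid a_{1:t}) & \text{if } \ae_{<t} \text{ is consistent with } \pi \text{ and } a_t = \pi(\ae_{<t}),\\
    \mathbbm{1}[e_t = e^0] & \text{otherwise.}
  \end{cases}
\]
On histories consistent with $\pi$ this makes $\rho$ agree with $\xi$, and after the first deviation it emits the hell percept deterministically. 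Since $\sum_{e_t}\xi(\cdots)\le 1$ while the hell branch sums to $1$, $\rho$ is a semimeasure; it is chronological because $\xi$ is and the hell branch ignores future actions; and since $\pi$ is computable and (in the relevant case) $\xi$ is lower semicomputable, the case distinction is decidable, so $\rho$ is lower semicomputable. Hence $\rho \in \Mlscccs$, and by \autoref{lem:mixing-mixtures} the mixture $\xi' := q\xi + q'\rho$ is a universal mixture for any rationals $q > 0$, $q' \ge 0$ with $q + q' \le 1$; I will take $q + q' = 1$ with $q$ small.

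Next I would record the linearity of the value in the environment for a \emph{fixed} policy $\sigma$: unrolling \autoref{def:discounted-value} shows that $\Gamma_t\,\nu(e_{<t} \dmid a_{<t})\,V^\sigma_\nu(ha)$ is linear in $\nu$, so for $h = \ae_{<t}$,
\[
  V^\sigma_{\xi'}(ha)
  ~=~
  \frac{q\,\xi(e_{<t} \dmid a_{<t})\,V^\sigma_\xi(ha) + q'\,\rho(e_{<t} \dmid a_{<t})\,V^\sigma_\rho(ha)}{\xi'(e_{<t} \dmid a_{<t})}.
\]
When $h$ is consistent with $\pi$, the on-policy agreement gives $\rho(e_{<t} \dmid a_{<t}) = \xi(e_{<t} \dmid a_{<t})$ and hence $\xi'(e_{<t} \dmid a_{<t}) = (q + q')\xi(e_{<t} \dmid a_{<t})$, so the posterior weights on $\xi$ and $\rho$ are exactly $q/(q+q')$ and $q'/(q+q')$. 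Taking $\sigma = \pi$, which never deviates so that $V^\pi_\rho(h) = V^\pi_\xi(h)$, yields $V^\pi_{\xi'}(h) = V^\pi_\xi(h)$; therefore the value of the action $\pi(h)$ obeys $V^*_{\xi'}(h\pi(h)) \ge V^\pi_{\xi'}(h) = V^\pi_\xi(h) > \varepsilon$.

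For any deviating action $a \neq \pi(h)$ the same decomposition with the same on-policy weights applies, and now $V^\sigma_\rho(ha) = 0$ for every $\sigma$ because $\rho$ pays reward $0$ forever after the deviation; using $V^\sigma_\xi \le 1$ this gives $V^\sigma_{\xi'}(ha) \le q/(q+q')$, and taking the supremum over $\sigma$ gives $V^*_{\xi'}(ha) \le q/(q+q')$. Choosing $q + q' = 1$ and a rational $q \in (0, \varepsilon)$ makes this at most $q < \varepsilon < V^*_{\xi'}(h\pi(h))$, so $\pi(h)$ strictly dominates every other action and is the unique $\xi'$-optimal action, as required. I expect the main obstacle to be the careful bookkeeping behind the value-linearity identity for unnormalized conditional semimeasures — in particular keeping the chronological conditioning $\nu(e_{<t} \dmid a_{<t}) = \nu(e_{<t} \dmid a_{1:t})$ straight when splitting the Bellman recursion — together with checking that $\rho$ is genuinely lower semicomputable; once those are in place the rest is the short ``hell yields value $0$ and carries weight $q' \approx 1$'' estimate.
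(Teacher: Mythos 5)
Your proof is correct and follows essentially the same route as the paper: the same dogmatic environment (mimic $\xi$ on-policy, deterministic hell after any deviation), the same use of \autoref{lem:mixing-mixtures} and of value-linearity in the environment (\autoref{lem:V-linear}), and the same observation that on-policy agreement $\rho(h)=\xi(h)$ freezes the posterior weights, so that deviating actions are capped by the small weight on $\xi$ while $\pi(h)$ retains value $V^\pi_\xi(h)>\varepsilon$. The only cosmetic difference is the choice of mixture weights --- you take $q'\approx 1$ on $\rho$ and rational $q<\varepsilon$ on $\xi$, whereas the paper uses $\xi'=\tfrac12\nu+\tfrac{\varepsilon}{2}\xi$ and bounds the deviation value by $\tfrac{\varepsilon}{1+\varepsilon}<\varepsilon$ --- which changes nothing substantive.
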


The proof proceeds by constructing a universal mixture that
assigns disproportionally high probability to
an environment $\nu$ that
sends any policy deviating from $\pi$ to hell.
Importantly, the environment $\nu$ produces observations according to
the universal mixture $\xi$.
Therefore $\nu$ is indistinguishable from $\xi$ on the policy $\pi$,
so the posterior belief in $\nu$ is equal to
the prior belief in $\nu$.

\begin{proof}
We assume $(0, 0) \in \E$.
Let $\pi$ be any computable policy and define
\[
\nu(e_{1:t} \dmid a_{1:t}) :=
\begin{cases}
\xi(e_{1:t} \dmid a_{1:t}), &\text{if }
  a_k = \pi(\ae_{<k}) \;\forall k \leq t, \\
\xi(e_{<k}  \dmid a_{<k}),  &\text{if }
  k := \min \{ i \mid a_i \neq \pi(\ae_{<i}) \} \text{ exists} \\
  &\text{ and } e_i = (0, 0)\; \forall i \in \{ k, \ldots, t \}, \\
0, &\text{otherwise}.
\end{cases}
\]
The environment $\nu$ mimics the universal environment $\xi$
until it receives an action that the policy $\pi$ would not take.
From then on, it provides rewards $0$.
Since $\xi$ is a LSCCCS and $\pi$ is a computable policy,
we have that $\nu \in \Mlscccs$.

Without loss of generality we assume that $\varepsilon$ is computable,
otherwise we make it slightly smaller.
Thus
$\xi' := \tfrac{1}{2} \nu + \tfrac{\varepsilon}{2} \xi$
is a universal mixture according to \autoref{lem:mixing-mixtures}.

Let $h \in \H$ be any history consistent with $\pi$ such that
$V^\pi_\xi(h) > \varepsilon$.
In the following,
we use the shorthand notation $\rho(h) := \rho(e_{1:t} \dmid a_{1:t})$
for a conditional semimeasure $\rho$ and $h =: \ae_{1:t}$.
Since $\nu$ gives observations and rewards according to $\xi$,
we have $\nu(h) = \xi(h)$,
and thus the posterior weight $w_\nu(h)$ of $\nu$ in $V_{\xi'}^\pi(h)$ is
constant while following $\pi$:
\[
   \frac{w_\nu(h)}{w_\nu}
:= \frac{\nu(h)}{\xi'(h)}
=  \frac{\xi(h)}{\xi'(h)}
=  \frac{\xi(h)}{\frac{1}{2}\nu(h) + \frac{\varepsilon}{2} \xi(h)}
=  \frac{\xi(h)}{\frac{1}{2}\xi(h) + \frac{\varepsilon}{2} \xi(h)}
=  \frac{2}{1 + \varepsilon}
\]
Therefore linearity of $V^{\tilde\pi}_\nu$ in $\nu$%
~\cite[Thm.\ 5.31, proved in \appendixref{app:additional-material}]{Hutter:2005}
implies that for all $a \in \A$,
\begin{equation}\label{eq:linearity}
    V^\pi_{\xi'}(ha)
~=~ w_\nu(h) V^\pi_\nu(ha) + w_\xi(h) V^\pi_\xi(ha)
~=~ \tfrac{1}{1+\varepsilon} V^\pi_\nu(ha)
    + \tfrac{\varepsilon}{1+\varepsilon} V^\pi_\xi(ha).
\end{equation}
Let $\alpha := \pi(h)$ be the next action according to $\pi$, and
let $\beta \neq \alpha$ be any other action.
We have that $V^\pi_\nu = V^\pi_\xi$ by definition of $\nu$,
therefore
\begin{equation}\label{eq:Vpixi'}
    V^\pi_{\xi'}(h\alpha)
~\stackrel{\text{\eqref{eq:linearity}}}{=}~
    \tfrac{1}{1 + \varepsilon} V^\pi_\nu(h\alpha)
    + \tfrac{\varepsilon}{1 + \varepsilon} V^\pi_\xi(h\alpha)
~=~ \tfrac{1}{1 + \varepsilon} V^\pi_\xi(h\alpha)
    + \tfrac{\varepsilon}{1 + \varepsilon} V^\pi_\xi(h\alpha)
~=~ V^\pi_\xi(h\alpha)
\end{equation}
We get that $V^*_{\xi'}(h\alpha) > V^*_{\xi'}(h\beta)$:
\begin{align*}
        V^*_{\xi'}(h\alpha)
~&\geq~ V^\pi_{\xi'}(h\alpha)
~\stackrel{\text{\eqref{eq:Vpixi'}}}{=}~
        V^\pi_\xi(h\alpha)
~=~     V^\pi_\xi(h)
~>~     \varepsilon, \\
        V^*_{\xi'}(h\beta)
~&\stackrel{\text{\eqref{eq:linearity}}}{=}~
        \tfrac{1}{1 + \varepsilon} V^{\pi^*_{\xi'}}_\nu(h\beta)
        + \tfrac{\varepsilon}{1 + \varepsilon} V^{\pi^*_{\xi'}}_\xi(h\beta)
~=~     \tfrac{\varepsilon}{1 + \varepsilon} V^{\pi^*_{\xi'}}_\xi(h\beta)
~\leq~  \tfrac{\varepsilon}{1 + \varepsilon}
~<~     \varepsilon,
\end{align*}
Hence the action $\alpha$ taken by $\pi$ is
the only $\xi'$-optimal action for the history $h$.
\end{proof}

\begin{corollary}[AIXI Emulating Computable Policies]
\label{cor:emulation}
Let $\varepsilon > 0$ and
let $\pi$ be any computable policy.
There is a universal mixture $\xi'$ such that
for any $\xi'$-optimal policy $\pi^*_{\xi'}$ and
for any (not necessarily computable) environment $\nu \in \Mccs$,
\[
    \left| V^{\pi^*_{\xi'}}_\nu(\epsilon) - V^\pi_\nu(\epsilon) \right|
~<~ \varepsilon.
\]
\end{corollary}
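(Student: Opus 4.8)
The plan is to reduce the statement to \autoref{lem:discounted-values} by forcing every $\xi'$-optimal policy to imitate $\pi$ over a sufficiently long finite horizon. Fix any universal mixture $\xi$; I will obtain $\xi'$ from \autoref{thm:dogmatic-prior} applied to $\xi$ and $\pi$. If $\Gamma_1 = 0$ then all values are $0$ and there is nothing to prove, so assume $\Gamma_1 > 0$. Since $\sum_t \gamma_t < \infty$ we have $\Gamma_{k+1} \to 0$; let $k$ be minimal with $\Gamma_{k+1} < \varepsilon \Gamma_1$, which also guarantees $\Gamma_t > 0$ for all $t \le k$. If I can choose $\xi'$ so that every $\xi'$-optimal policy agrees with $\pi$ on all histories consistent with $\pi$ of length $< k$, then $\pi^*_{\xi'}$ and $\pi$ coincide for the first $k$ steps and \autoref{lem:discounted-values} yields $|V^{\pi^*_{\xi'}}_\nu(\epsilon) - V^\pi_\nu(\epsilon)| \le \Gamma_{k+1}/\Gamma_1 < \varepsilon$ for all $\nu \in \Mccs$, which is the claim.

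By \autoref{thm:dogmatic-prior} with parameter $\varepsilon'$, the dogmatic prior $\xi'$ makes $\pi(h)$ the unique $\xi'$-optimal action at every history $h$ consistent with $\pi$ with $V^\pi_\xi(h) > \varepsilon'$. So it suffices to find a threshold $\varepsilon' > 0$ below which $V^\pi_\xi$ never falls on histories of length $< k$ — and the difficulty is that this must hold uniformly, with no reference to the true environment $\nu$, which need not even be computable. The key step is to lower-bound $V^\pi_\xi$ by a single fixed environment in the mixture that is compatible with every history. Assuming some percept carries positive reward (otherwise all values vanish and the claim is trivial), let $\rho \in \Mlscccs$ emit i.i.d.\ uniform percepts, so that $\rho(e_{<t} \dmid a_{<t}) = |\E|^{-(t-1)}$ and $V^\pi_\rho \equiv \bar r > 0$, where $\bar r$ is the mean reward of a uniform percept. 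Writing $h = \ae_{<t}$ and using the posterior-weighted linearity of $V^\pi_\xi$ in the environment together with $\xi(h) \le 1$,
\[
     V^\pi_\xi(h)
~\ge~ \frac{w_\rho\, \rho(h)}{\xi(h)}\, V^\pi_\rho(h)
~\ge~ w_\rho\, \bar r\, |\E|^{-(t-1)}.
\]
Because $\rho$ gives positive probability to every percept string, this bound holds for an arbitrary history $h$, independently of $\nu$.

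I would then assemble the pieces: set $\delta := w_\rho\, \bar r\, |\E|^{-(k-1)} > 0$, a uniform lower bound for $V^\pi_\xi(h)$ over all histories $h$ consistent with $\pi$ of length $< k$, pick a rational $\varepsilon' \in (0, \delta)$, and take $\xi'$ to be the dogmatic prior of \autoref{thm:dogmatic-prior} for this $\varepsilon'$. A short induction along the history then shows that every $\xi'$-optimal policy stays consistent with $\pi$ and selects $\pi$'s action at each of the first $k$ steps, after which \autoref{lem:discounted-values} closes the argument.

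The main obstacle is the uniformity in the second paragraph. The naive estimate $V^\pi_\xi(h) \ge w_\nu(h) V^\pi_\nu(h)$ only controls $V^\pi_\xi$ through environments already in the mixture, i.e.\ computable ones, and its constant degrades with the complexity of $\nu$, so it cannot produce a single threshold $\varepsilon'$ that works simultaneously for every $\nu \in \Mccs$. Using the uniform-percept environment $\rho$ circumvents this precisely because $\rho$ is consistent with any trajectory whatsoever: its posterior weight, and hence the induced lower bound on $V^\pi_\xi$, survives along every realized history regardless of the true environment, at the sole cost of the factor $|\E|^{-(t-1)}$, which the finite horizon $k$ keeps bounded away from $0$.
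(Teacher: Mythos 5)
Your proof is correct and takes essentially the same route as the paper's: pick $k$ with $\Gamma_{k+1}/\Gamma_1 < \varepsilon$, invoke the dogmatic prior (\autoref{thm:dogmatic-prior}) with a threshold $\varepsilon'$ below which $V^\pi_\xi$ never falls on histories consistent with $\pi$ of length $< k$, and conclude with \autoref{lem:discounted-values}. The only difference is in how $\varepsilon'$ is obtained: the paper simply uses finiteness of the set of histories of length $\leq k$ together with the (there unproved) assertion $V^\pi_\xi(h) > 0$, whereas you derive the explicit uniform bound $V^\pi_\xi(h) \geq w_\rho\, \bar r\, |\E|^{-(k-1)}$ via the i.i.d.\ uniform-percept environment $\rho$ and \autoref{lem:V-linear} --- a constructive refinement that in fact substantiates precisely the positivity claim the paper takes for granted.
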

\begin{proof}
Let $\varepsilon > 0$.
Since $\Gamma_k \to 0$ as $k \to \infty$,
we can choose $k$ large enough such that $\Gamma_{k+1}/\Gamma_1 < \varepsilon$.
Let $\varepsilon' > 0$ be small enough such that
$V^\pi_\xi(h) > \varepsilon'$ for all $h$ with $|h| \leq k$.
This is possible since $V^\pi_\xi(h) > 0$ for all $h$ and
the set of histories of length $\leq k$ is finite
because of \hyperref[ass:finite-actions-and-percepts]%
{Assumption~\ref*{ass:aixi}\ref*{ass:finite-actions-and-percepts}}.
We use the dogmatic prior from \autoref{thm:dogmatic-prior}
to construct a universal mixture $\xi'$
for the policy $\pi$ and $\varepsilon' > 0$.
Thus for any history $h \in \H$ consistent with $\pi$ and $|h| \leq k$,
the action $\pi(h)$ is the only $\xi'$-optimal action.
The claim now follows from \autoref{lem:discounted-values}.
\end{proof}

\begin{corollary}[With Finite Lifetime Every Policy is an AIXI]
\label{cor:finite-lifetime-every-policy-is-AIXI}
If $\Gamma_{m+1} = 0$ for some $m \in \mathbb{N}$,
then for any policy $\pi$
there is a universal mixture $\xi'$ such that
$\pi(h)$ is the only $\xi'$-optimal action for all histories $h$
consistent with $\pi$ and $|h| \leq m$.
\end{corollary}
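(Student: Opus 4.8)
The plan is to reduce the claim to the Dogmatic Prior (\autoref{thm:dogmatic-prior}), whose only genuine hypothesis is that the conforming policy be \emph{computable}. Since the policy $\pi$ here is arbitrary and possibly uncomputable, the first task is to replace it by a computable surrogate without disturbing any value that matters. This is exactly what the finite lifetime provides: because $\Gamma_{m+1} = 0$, no reward beyond step $m$ contributes to any value, so every $V^\pi_\xi(h)$ with $|h| < m$ depends only on the actions $\pi$ prescribes at histories of length $< m$. By \hyperref[ass:finite-actions-and-percepts]{Assumption~\ref*{ass:aixi}\ref*{ass:finite-actions-and-percepts}} there are only finitely many such histories, so this fragment of $\pi$ is a finite table and hence computable.

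First I would define a computable policy $\pi'$ by setting $\pi'(h) := \pi(h)$ whenever $|h| < m$ and letting $\pi'$ emit some fixed default action on all longer histories. The finite lookup table makes $\pi'$ computable, so \autoref{thm:dogmatic-prior} applies to $\pi'$. Moreover, since values ignore everything past step $m$, we have $V^{\pi'}_\xi(h) = V^\pi_\xi(h)$ for every $h$ with $|h| < m$, and $\pi'$ agrees with $\pi$ on precisely the histories we care about.

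Next I would fix the threshold. As in the proof of \autoref{cor:emulation}, every history $h$ with $|h| < m$ consistent with $\pi$ satisfies $V^\pi_\xi(h) > 0$: the normalizer obeys $\Gamma_{|h|+1} \geq \Gamma_m > 0$, and $\xi$ assigns positive probability to a positive-reward percept. Since there are only finitely many such histories, I can choose $\varepsilon > 0$ strictly below their minimum value. Applying \autoref{thm:dogmatic-prior} to $\pi'$, $\xi$, and this $\varepsilon$ yields a universal mixture $\xi'$ for which $\pi'(h) = \pi(h)$ is the unique $\xi'$-optimal action at every $h$ consistent with $\pi$ satisfying $V^{\pi'}_\xi(h) = V^\pi_\xi(h) > \varepsilon$, i.e. at every relevant history of length $< m$.

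The main obstacle, and essentially the only non-bookkeeping point, is the reduction to a computable policy, which is exactly where the finite lifetime is indispensable: without $\Gamma_{m+1} = 0$ infinitely many of $\pi$'s action choices influence the values, the truncation is no longer a finite object, and computability of $\pi$ cannot be dispensed with (this is why \autoref{cor:emulation} assumes $\pi$ computable). A second, minor point is the boundary $|h| = m$: there $\Gamma_{m+1} = 0$ forces $V^*_{\xi'}(h) = 0$ for every action, so $\pi(h)$ is trivially optimal but not uniquely so, and the argument above is vacuous; the substantive content is thus confined to the histories of length $< m$, which are precisely the agent's $m$ relevant decision points.
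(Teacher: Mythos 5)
Your proof is correct and follows the same route as the paper's own proof---apply the dogmatic prior (\autoref{thm:dogmatic-prior}) with a threshold chosen below the minimum of $V^\pi_\xi$ over the finitely many relevant histories, exactly as in \autoref{cor:emulation}---but it is more careful on two points where the paper's two-line proof is actually deficient. First, the paper applies \autoref{thm:dogmatic-prior} directly to $\pi$, even though the corollary quantifies over \emph{all} policies and the theorem requires a computable one; your truncation of $\pi$ to a finite lookup table on histories of length $< m$ (with $V^{\pi'}_\xi(h) = V^\pi_\xi(h)$ on those histories because $\Gamma_{m+1} = 0$, and with consistency with $\pi$ at lengths $< m$ implying consistency with $\pi'$) supplies precisely the step needed to make the statement hold for uncomputable $\pi$; note this is the only place the construction uses lower semicomputability, since the environment $\nu$ built in \autoref{thm:dogmatic-prior} must detect deviations from the policy. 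Second, your boundary remark is accurate: at $|h| = m$ the action being chosen is $a_{m+1}$, and $\Gamma_{m+1} = 0$ forces all action values to zero, so every action is optimal and uniqueness fails whenever $|\A| \geq 2$; correspondingly, the paper's requirement that $V^\pi_\xi(h) > \varepsilon'$ for all $|h| \leq m$ is unsatisfiable at $|h| = m$, where $V^\pi_\xi(h) = 0$ by definition. The corollary and its proof should read $|h| < m$, covering the agent's $m$ substantive decisions (and one should take $m$ to be the lifetime in the sense $\Gamma_m > 0$ of the notation list, as your bound $\Gamma_{|h|+1} \geq \Gamma_m > 0$ implicitly does); with that reading your argument is complete.
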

In contrast to \autoref{thm:indifference-prior}
where every policy is $\xi'$-optimal for a fixed universal mixture $\xi'$,
\autoref{cor:finite-lifetime-every-policy-is-AIXI} gives
a different universal mixture $\xi'$
for every policy $\pi$ such that
$\pi$ is the only $\xi'$-optimal policy.

\begin{proof}
Analogously to the proof of \autoref{cor:emulation},
let $\varepsilon' > 0$ be small enough such that
$V^\pi_\xi(h) > \varepsilon'$ for all $h$ with $|h| \leq m$.
Again, we use the dogmatic prior from \autoref{thm:dogmatic-prior}
to construct a universal mixture $\xi'$
for the policy $\pi$ and $\varepsilon' > 0$.
Thus for any history $h \in \H$ consistent with $\pi$ and $|h| \leq m$,
the action $\pi(h)$ is the only $\xi'$-optimal action.
\end{proof}

\section{Consequences for Legg-Hutter Intelligence}
\label{sec:intelligence}

The aim of the Legg-Hutter intelligence measure is to formalize the intuitive
notion of intelligence mathematically.
If we take intelligence to mean
\emph{an agent's ability to achieve goals in a wide range of environments}
\citep{LH:2007int},
and we weigh environments according to the universal prior,
then the intelligence of a policy $\pi$ corresponds to
the value that $\pi$ achieves in the corresponding universal mixture.
We use the results form the previous section
to illustrate some problems with this intelligence measure
in the absence of a \emph{natural} UTM.

\begin{definition}[{Legg-Hutter Intelligence \citep{LH:2007int}})]
\label{def:intelligence}
The \emph{intelligence}\footnote{
\cite{LH:2007int} consider a subclass of $\Mlscccs$,
the class of computable \emph{measures},
and do not use discounting explicitly.}
of a policy $\pi$ is defined as
\[
     \Upsilon_\xi(\pi)
~:=~ \sum_{\nu \in \Mlscccs} w_\nu V^\pi_\nu(\epsilon)
~ =~ V^\pi_\xi(\epsilon).
\]
\end{definition}
Typically, the index $\xi$ is omitted when writing $\Upsilon$.
However, in this paper we consider
the intelligence measure with respect to different universal mixtures,
therefore we make this dependency explicit.

Because the value function is scaled to be in the interval $[0, 1]$,
intelligence is a real number between $0$ and $1$.
Legg-Hutter intelligence is linked to \emph{balanced Pareto optimality}:
a policy is said to be \emph{balanced Pareto optimal} iff
it scores the highest intelligence score:
\[
     \overline\Upsilon_\xi
~:=~ \sup_\pi \Upsilon_\xi(\pi)
~ =~ \Upsilon_\xi(\pi^*_\xi).
\]
AIXI is balanced Pareto optimal~\cite[Thm.\ 5.24]{Hutter:2005}.
It is just as hard to score very high on the Legg-Hutter intelligence measure
as it is to score very low:
we can always turn a reward minimizer into a reward maximizer by inverting
the rewards $r_t' := 1 - r_t$.
Hence the lowest possible intelligence score is achieved by
AIXI's twin sister, a $\xi$-expected reward minimizer:
\[
     \underline\Upsilon_\xi
~:=~ \inf_\pi \Upsilon_\xi(\pi).
\]
The heaven environment (reward $1$ forever) and
the hell environment (reward $0$ forever) are computable
and thus in the environment class $\Mlscccs$;
therefore it is impossible
to get a reward $0$ or reward $1$ in every environment.
Consequently, for all policies $\pi$,
\begin{equation}\label{eq:bounds-on-intelligence}
       0
~<   ~ \underline\Upsilon_\xi
~\leq~ \Upsilon_\xi(\pi)
~\leq~ \overline\Upsilon_\xi
~<   ~ 1.
\end{equation}
See \autoref{fig:intelligence}.
It is natural to fix the policy \texttt{random}
that takes actions uniformly at random
to have an intelligence score of $1 / 2$
by choosing a `symmetric' universal prior~\citep{LV:2013int}.

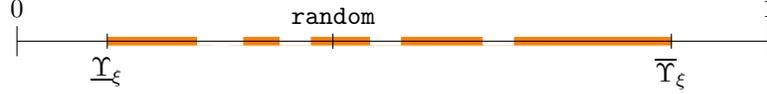
\begin{figure}[t]
\begin{center}
\begin{tikzpicture}
\filldraw[orange] (1.2, .05) -- (8.7, .05) -- (8.7, -.05) -- (1.2, -.05);
\foreach \x in {2.4, 2.6, 3.5, 4.7, 6.2} {
  \filldraw[white] (\x, .05) -- ({\x+.4}, .05) -- ({\x+.4}, -.05) -- (\x, -.05);
}
\draw (0,0) to (10, 0);
\draw (0, -.2) to (0, .2) node[above] {$0$};
\draw (10, -.2) to (10, .2) node[above] {$1$};
\draw (4.2, -.1) to (4.2, .1) node[above] {\texttt{random}};
\draw (8.7, .1) to (8.7, -.1) node[below] {$\overline\Upsilon_\xi$};
\draw (1.2, .1) to (1.2, -.1) node[below] {$\underline\Upsilon_\xi$};
\end{tikzpicture}
\end{center}
\caption{
The Legg-Hutter intelligence measure assigns values
within the closed interval $[\underline\Upsilon_\xi, \overline\Upsilon_\xi]$;
the assigned values are depicted in orange.
By \autoref{thm:computable-policies-are-dense},
computable policies are dense in this orange set.
}
\label{fig:intelligence}
\end{figure}

AIXI is not computable~\cite[Thm.\ 14]{LH:2015computability},
hence there is no computable policy $\pi$ such that
$\Upsilon_\xi(\pi) = \underline\Upsilon_\xi$ or
$\Upsilon_\xi(\pi) = \overline\Upsilon_\xi$ for any universal mixture $\xi$.
But the next theorem tells us that
computable policies can come arbitrarily close.
This is no surprise:
by \autoref{lem:discounted-values}
we can do well on a Legg-Hutter intelligence test
simply by memorizing what AIXI would do for the first $k$ steps;
as long as $k$ is chosen large enough such that
discounting makes the remaining rewards contribute
very little to the value function.

\begin{theorem}[Computable Policies are Dense]
\label{thm:computable-policies-are-dense}
The set
\[
\{ \Upsilon_\xi(\pi) \mid \pi \text{ is a computable policy} \}
\]
is dense in the set of intelligence scores
\[
\{ \Upsilon_\xi(\pi) \mid \pi \text{ is a policy} \}.
\]
\end{theorem}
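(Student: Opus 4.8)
The plan is to approximate an arbitrary policy by a computable one that imitates it for sufficiently many steps and then behaves arbitrarily. Recall that for $A \subseteq B$, density of $A$ in $B$ means that every element of $B$ can be approximated to within any $\varepsilon > 0$ by an element of $A$. Since computable policies are in particular policies, it therefore suffices to show the following: for every policy $\pi$ and every $\varepsilon > 0$ there is a computable policy $\tilde\pi$ with $\left| \Upsilon_\xi(\tilde\pi) - \Upsilon_\xi(\pi) \right| < \varepsilon$.

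First I would fix a policy $\pi$ and an $\varepsilon > 0$. Because the discount function is summable, $\sum_{t=1}^\infty \gamma_t < \infty$, the tails $\Gamma_{k+1} = \sum_{i=k+1}^\infty \gamma_i$ converge to $0$ as $k \to \infty$. Hence I can choose $k \in \mathbb{N}$ large enough that $\Gamma_{k+1}/\Gamma_1 < \varepsilon$. The idea is that after $k$ steps discounting makes the remaining rewards contribute less than $\varepsilon$ to the value function, so only the first $k$ actions need to be reproduced faithfully.

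Next I would define $\tilde\pi$ to agree with $\pi$ on all histories $\ae_{<t}$ with $t \leq k$, and to take some fixed default action (say, a fixed $a \in \A$) on every longer history. The crucial point that makes $\tilde\pi$ computable, even though $\pi$ itself may be uncomputable, is finiteness: by \hyperref[ass:finite-actions-and-percepts]{Assumption~\ref*{ass:aixi}\ref*{ass:finite-actions-and-percepts}} the action and percept spaces are finite, so there are only finitely many histories of length $< k$. The behavior of $\pi$ on these finitely many histories can be hard-coded into a finite lookup table, and $\tilde\pi$ answers every query either from this table or with the default action; thus $\tilde\pi$ is computable. With this definition, $\tilde\pi$ and $\pi$ coincide for the first $k$ steps in the sense required by \autoref{lem:discounted-values}.

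Finally I would apply \autoref{lem:discounted-values} with $\nu := \xi$ to conclude $\left| \Upsilon_\xi(\tilde\pi) - \Upsilon_\xi(\pi) \right| = \left| V^{\tilde\pi}_\xi(\epsilon) - V^\pi_\xi(\epsilon) \right| \leq \Gamma_{k+1}/\Gamma_1 < \varepsilon$, which is exactly the approximation claim and hence establishes density. The only genuine obstacle is ensuring that $\tilde\pi$ is computable; this is dispatched entirely by the finiteness of $\A$ and $\E$, which bounds the number of relevant histories and lets us use a finite table. Every remaining step is a direct invocation of the discounting lemma and the summability of $\gamma$.
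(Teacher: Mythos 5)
Your proposal is correct and follows essentially the same route as the paper's own proof: choose $k$ with $\Gamma_{k+1}/\Gamma_1 < \varepsilon$, copy $\pi$ for the first $k$ steps into a finite lookup table (computable by finiteness of $\A$ and $\E$), take a fixed default action afterwards, and conclude via \autoref{lem:discounted-values}. There is nothing to add; the argument is complete as stated.
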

\begin{proof}
Let $\pi$ be any policy and let $\varepsilon > 0$.
We need to show that there is a computable policy $\tilde\pi$ with
$|\Upsilon_\xi(\tilde\pi) - \Upsilon_\xi(\pi)| < \varepsilon$.
We choose $k$ large enough such that $\Gamma_{k+1} / \Gamma_1 < \varepsilon$.
Let $\alpha \in \A$ be arbitrary and define the policy
\[
\tilde\pi(h) ~:=~
\begin{cases}
\pi(h) &\text{if } |h| \leq k, \text{ and} \\
\alpha &\text{otherwise}.
\end{cases}
\]
The policy $\tilde\pi$ is computable because we can store
the actions of $\pi$ for the first $k$ steps in a lookup table.
By \autoref{lem:discounted-values} we get
$
     |\Upsilon_\xi(\pi) - \Upsilon_\xi(\tilde\pi)|
=    | V^\pi_\xi(\epsilon) - V^{\tilde\pi}_\xi(\epsilon) |
\leq \Gamma_{k+1} / \Gamma_1
<    \varepsilon
$.
\end{proof}

\begin{remark}[{Intelligence is not Dense in
$[\underline\Upsilon_\xi, \overline\Upsilon_\xi]$}]
\label{rem:intelligence-is-not-dense}
The intelligence values of policies are generally not
dense in the interval $[\underline\Upsilon_\xi, \overline\Upsilon_\xi]$.
We show this by defining an environment $\nu$ where
the first action determines whether the agent goes to heaven or hell:
action $\alpha$ leads to heaven and action $\beta$ leads to hell.
The semimeasure
$\xi' := 0.999 \nu + 0.001 \xi$
is a universal mixture by \autoref{lem:mixing-mixtures}.
Let $\pi$ be any policy.
If $\pi$ takes action $\alpha$ first,
then $\Upsilon_{\xi'}(\pi) > 0.999$.
If $\pi$ takes action $\beta$ first,
then $\Upsilon_{\xi'}(\pi) < 0.001$.
Hence there are no policies that
score an intelligence value in the closed interval $[0.001, 0.999]$.
\end{remark}

Legg-Hutter intelligence is measured
with respect to a fixed UTM.
AIXI is the most intelligent policy
\emph{if it uses the same UTM}.
But if we build AIXI with a dogmatic prior,
its intelligence score can be arbitrary close to
the minimum intelligence score $\underline\Upsilon_\xi$.

\begin{corollary}[Some AIXIs are Stupid]
\label{cor:some-AIXIs-are-stupid}
For any universal mixture $\xi$ and every $\varepsilon > 0$,
there is a universal mixture $\xi'$ such that
$\Upsilon_\xi(\pi^*_{\xi'}) < \underline\Upsilon_\xi + \varepsilon$.
\end{corollary}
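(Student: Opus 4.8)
The plan is to combine the dogmatic prior construction (\autoref{thm:dogmatic-prior}) with the fact that a reward minimizer achieves the minimal intelligence score. The target is to build an AIXI whose optimal policy behaves like a reward minimizer with respect to $\xi$, so that its $\Upsilon_\xi$-score drops near $\underline\Upsilon_\xi$. The key conceptual point is that $\underline\Upsilon_\xi = \inf_\pi \Upsilon_\xi(\pi)$ is, by the symmetry remark following \autoref{def:intelligence}, achieved by AIXI's ``twin sister,'' the $\xi$-expected reward minimizer $\pi^-_\xi$ (the optimal policy for the inverted-reward problem). This minimizing policy itself need not be computable, so I cannot feed it directly into \autoref{thm:dogmatic-prior}, which requires a \emph{computable} $\pi$. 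The fix is to first approximate it by a computable policy.

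First I would fix $\xi$ and $\varepsilon > 0$. By the reasoning in \autoref{thm:computable-policies-are-dense} (computable policies are dense in the set of intelligence scores), there is a computable policy $\pi$ whose $\xi$-intelligence is within $\varepsilon/2$ of $\underline\Upsilon_\xi$; concretely one takes $\pi$ to agree with a $\xi$-reward-minimizing policy for the first $k$ steps (with $k$ large enough that $\Gamma_{k+1}/\Gamma_1 < \varepsilon/2$) and act arbitrarily afterwards, which is computable via a finite lookup table and satisfies $\Upsilon_\xi(\pi) < \underline\Upsilon_\xi + \varepsilon/2$ by \autoref{lem:discounted-values}. Next I would invoke \autoref{cor:emulation} with this computable $\pi$ and accuracy parameter $\varepsilon/2$: this yields a universal mixture $\xi'$ such that for \emph{every} $\xi'$-optimal policy $\pi^*_{\xi'}$ and every environment $\nu \in \Mccs$ we have $|V^{\pi^*_{\xi'}}_\nu(\epsilon) - V^\pi_\nu(\epsilon)| < \varepsilon/2$.

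Applying \autoref{cor:emulation} with the specific choice $\nu = \xi$ gives $|V^{\pi^*_{\xi'}}_\xi(\epsilon) - V^\pi_\xi(\epsilon)| < \varepsilon/2$, that is, $|\Upsilon_\xi(\pi^*_{\xi'}) - \Upsilon_\xi(\pi)| < \varepsilon/2$. Chaining with $\Upsilon_\xi(\pi) < \underline\Upsilon_\xi + \varepsilon/2$ yields
\[
\Upsilon_\xi(\pi^*_{\xi'}) < \Upsilon_\xi(\pi) + \tfrac{\varepsilon}{2} < \underline\Upsilon_\xi + \varepsilon,
\]
as desired. Note that $\xi$ itself is a legitimate, though generally uncomputable, member of $\Mccs$, which is exactly why \autoref{cor:emulation} was stated for the larger class $\Mccs$ rather than just $\Mlscccs$; this is what lets me measure the emulating policy's value against the original mixture $\xi$.

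The main obstacle is the non-computability of the genuine reward minimizer: \autoref{thm:dogmatic-prior} and hence \autoref{cor:emulation} only apply to computable target policies, so I cannot directly emulate $\pi^-_\xi$. The density argument circumvents this by replacing $\pi^-_\xi$ with a computable policy that is $\varepsilon/2$-close in $\xi$-value, after which the two error terms are controlled independently and summed. The only subtlety to verify is that a $\xi$-reward-minimizing policy does attain (or approach) $\underline\Upsilon_\xi$, which follows from the inversion symmetry $r_t' := 1 - r_t$ noted in the discussion of \autoref{def:intelligence}: minimizing $\xi$-expected reward is the same optimization problem as maximizing inverted reward, so an optimal policy for the inverted problem exists by the same existence result cited for \autoref{def:optimal-policy}.
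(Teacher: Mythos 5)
Your proposal is correct and follows essentially the same route as the paper's own proof: approximate a near-minimal policy by a computable one via \autoref{thm:computable-policies-are-dense}, then apply \autoref{cor:emulation} with $\nu = \xi$ and the triangle inequality. The only superfluous element is your appeal to an exact reward-minimizing policy $\pi^-_\xi$; the definition of $\underline\Upsilon_\xi$ as an infimum together with the density theorem already yields a computable $\pi$ with $\Upsilon_\xi(\pi) < \underline\Upsilon_\xi + \varepsilon/2$, so no existence (or inversion-symmetry) argument is needed.
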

\begin{proof}
Let $\varepsilon > 0$.
According to \autoref{thm:computable-policies-are-dense},
there is a computable policy $\pi$ such that
$\Upsilon_\xi(\pi) < \underline\Upsilon_\xi + \varepsilon / 2$.
From \autoref{cor:emulation} we get a universal mixture $\xi'$ such that
$
  | \Upsilon_\xi(\pi^*_{\xi'}) - \Upsilon_\xi(\pi) |
= | V^{\pi^*_{\xi'}}_\xi(\epsilon) - V^\pi_\xi(\epsilon) |
< \varepsilon / 2
$,
hence
$
     |\Upsilon_\xi(\pi^*_{\xi'}) - \underline\Upsilon_\xi|
\leq |\Upsilon_\xi(\pi^*_{\xi'}) - \Upsilon_\xi(\pi)|
     + |\Upsilon_\xi(\pi) - \underline\Upsilon_\xi|
<    \varepsilon / 2 + \varepsilon / 2
=    \varepsilon
$.
\end{proof}
We get the same result if we fix AIXI, but rig the intelligence measure.

\begin{corollary}[AIXI is Stupid for Some $\Upsilon$]
\label{cor:AIXI-is-stupid}
For any $\xi$-optimal policy $\pi^*_\xi$ and for every $\varepsilon > 0$
there is a universal mixture $\xi'$ such that
$\Upsilon_{\xi'}(\pi^*_\xi) \leq \varepsilon$ and
$\overline\Upsilon_{\xi'} \geq 1 - \varepsilon$.
\end{corollary}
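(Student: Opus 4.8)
The plan is to rig the intelligence test by building a single environment in which the fixed AIXI's very first move is catastrophic. Concretely, let $a^* := \pi^*_\xi(\epsilon)$ be the first action taken by the given $\xi$-optimal policy, and (using that $|\A| \geq 2$) fix some action $\beta \neq a^*$. As in \autoref{rem:intelligence-is-not-dense}, I would define a computable environment $\nu$ whose \emph{first} action decides heaven or hell: taking $a^*$ as the first action yields reward $0$ forever, while any first action other than $a^*$ yields reward $1$ forever. Since $\nu$ is a deterministic computable measure it is chronological and lower semicomputable, so $\nu \in \Mlscccs$. Then I would set $\xi' := \varepsilon\,\xi + (1 - \varepsilon)\,\nu$ (replacing $\varepsilon$ by a slightly smaller rational if necessary, exactly as in the proof of \autoref{thm:dogmatic-prior}); by \autoref{lem:mixing-mixtures} this is again a universal mixture.

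The key computation uses that for a fixed policy $\pi$ the value $V^\pi_\nu(\epsilon)$ is linear in the environment $\nu$~\cite[Thm.\ 5.31]{Hutter:2005}, so that $V^\pi_{\xi'}(\epsilon) = \varepsilon\, V^\pi_\xi(\epsilon) + (1-\varepsilon)\, V^\pi_\nu(\epsilon)$ for every policy $\pi$. For $\pi^*_\xi$ the first action is $a^*$, which sends it to hell, so $V^{\pi^*_\xi}_\nu(\epsilon) = 0$ and hence $\Upsilon_{\xi'}(\pi^*_\xi) = \varepsilon\, V^{\pi^*_\xi}_\xi(\epsilon) \leq \varepsilon$ because values lie in $[0,1]$. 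Conversely, any policy $\tilde\pi$ whose first action is $\beta \neq a^*$ lands in heaven, giving $V^{\tilde\pi}_\nu(\epsilon) = 1$ and therefore $\overline\Upsilon_{\xi'} \geq \Upsilon_{\xi'}(\tilde\pi) = \varepsilon\, V^{\tilde\pi}_\xi(\epsilon) + (1-\varepsilon) \geq 1 - \varepsilon$. Both required inequalities then follow immediately.

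I expect the only delicate points to be bookkeeping rather than substance: checking that $\nu \in \Mlscccs$ (the percepts depend on the history only through $a_1$, and $\nu$ is computable, hence lower semicomputable) and that the mixing weights are rational so \autoref{lem:mixing-mixtures} applies; both are routine. Note also that the construction genuinely needs $|\A| \geq 2$, since with a single action every policy coincides with $\pi^*_\xi$ and the two bounds cannot hold simultaneously, so this hypothesis is essential. The one genuinely important idea, and the step I would flag as the crux, is making $\pi^*_\xi$'s \emph{own} first action the path to hell: this is what decouples the score of the fixed AIXI from the attainable maximum, and it works precisely because $\pi^*_\xi(\epsilon)$ is a single determined action that can be read off before $\nu$ is built.
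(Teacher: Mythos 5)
Your proof is correct and follows essentially the same route as the paper's: the paper also defines $\nu$ exactly as in \autoref{rem:intelligence-is-not-dense} with $\pi^*_\xi$'s first action leading to hell and all others to heaven, mixes $\xi' := (1-\varepsilon)\nu + \varepsilon\xi$ via \autoref{lem:mixing-mixtures}, and performs the same two value computations. Your extra remarks on rationalizing the mixing weight and on the implicit assumption $|\A| \geq 2$ are points the paper glosses over, and are handled correctly.
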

\begin{proof}
Let $a_1 := \pi^*_\xi(\epsilon)$ be the first action that $\pi^*_\xi$ takes.
We define an environment $\nu$ such that
taking the first action $a_1$ leads to hell and
taking any other first action leads to heaven
as in \autoref{rem:intelligence-is-not-dense}.
With \autoref{lem:mixing-mixtures} we define the universal mixture
$\xi' := (1 - \varepsilon) \nu + \varepsilon \xi$.
Since $\pi^*_\xi$ takes action $a_1$ first,
it goes to hell, i.e., $V^{\pi^*_\xi}_\nu(\epsilon) = 0$.
Hence
\begin{align*}
       \Upsilon_{\xi'}(\pi^*_\xi)
~&=~   V^{\pi^*_\xi}_{\xi'}(\epsilon)
~=~    (1 - \varepsilon) V^{\pi^*_\xi}_\nu(\epsilon)
       + \varepsilon V^{\pi^*_\xi}_\xi(\epsilon)
~\leq~ \varepsilon. \\
\intertext{
For any policy $\pi$ that
takes an action other than $a_1$ first,
we get
}
       \Upsilon_{\xi'}(\pi)
~&=~   V^\pi_{\xi'}(\epsilon)
~=~    (1 - \varepsilon) V^\pi_\nu(\epsilon) + \varepsilon V^\pi_\xi(\epsilon)
~\geq~ 1 - \varepsilon.
\qedhere
\end{align*}
\end{proof}

On the other hand,
we can make any computable policy smart
if we choose the right universal mixture.
In particular, we get that there is a universal mixture such that
`do nothing' is the most intelligent policy save for some $\varepsilon$!

\begin{corollary}[Computable Policies can be Smart]
\label{cor:any-computable-policy-can-be-smart}
For any computable policy $\pi$ and any $\varepsilon > 0$
there is a universal mixture $\xi'$ such that
$\Upsilon_{\xi'}(\pi) > \overline\Upsilon_{\xi'} - \varepsilon$.
\end{corollary}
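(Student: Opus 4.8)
The plan is to reduce the statement directly to Corollary~\ref{cor:emulation}, exploiting the fact that a universal mixture $\xi'$ is itself a legitimate environment against which value can be measured. Recall that by definition $\overline\Upsilon_{\xi'} = \Upsilon_{\xi'}(\pi^*_{\xi'}) = V^{\pi^*_{\xi'}}_{\xi'}(\epsilon)$ and $\Upsilon_{\xi'}(\pi) = V^\pi_{\xi'}(\epsilon)$, so the desired inequality $\Upsilon_{\xi'}(\pi) > \overline\Upsilon_{\xi'} - \varepsilon$ is equivalent to $V^{\pi^*_{\xi'}}_{\xi'}(\epsilon) - V^\pi_{\xi'}(\epsilon) < \varepsilon$. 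This is exactly the kind of bound that Corollary~\ref{cor:emulation} delivers, \emph{provided} we are permitted to take the measuring environment to be $\xi'$ itself.

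First I would invoke Corollary~\ref{cor:emulation} with the given computable policy $\pi$ and the given $\varepsilon > 0$, obtaining a universal mixture $\xi'$ such that $|V^{\pi^*_{\xi'}}_\nu(\epsilon) - V^\pi_\nu(\epsilon)| < \varepsilon$ for every $\xi'$-optimal policy $\pi^*_{\xi'}$ and every environment $\nu \in \Mccs$. The key step is then to instantiate this uniform bound at $\nu := \xi'$. This substitution is permissible because $\xi'$, being a universal mixture, is an LSCCCS and hence lies in $\Mccs$; since the corollary quantifies over \emph{all} (not necessarily computable) environments in $\Mccs$, nothing prevents the environment from coinciding with the agent's own belief.

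Plugging in $\nu = \xi'$ gives $V^{\pi^*_{\xi'}}_{\xi'}(\epsilon) - V^\pi_{\xi'}(\epsilon) \leq |V^{\pi^*_{\xi'}}_{\xi'}(\epsilon) - V^\pi_{\xi'}(\epsilon)| < \varepsilon$. Rewriting the two value terms as $\overline\Upsilon_{\xi'}$ and $\Upsilon_{\xi'}(\pi)$ respectively yields $\overline\Upsilon_{\xi'} - \Upsilon_{\xi'}(\pi) < \varepsilon$, which is precisely the claim.

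There is essentially no hard computational step: the entire content is the conceptual observation that the dogmatic construction underlying Corollary~\ref{cor:emulation} forces every $\xi'$-optimal policy to track $\pi$ closely in \emph{every} environment, while measuring Legg-Hutter intelligence means measuring value specifically in the environment $\xi'$. The only point demanding a moment of care—and the nearest thing to an obstacle—is verifying that $\xi'$ is itself an admissible environment, so that the self-referential substitution $\nu = \xi'$ is legitimate; once that is granted, the corollary does all the work.
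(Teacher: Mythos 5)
Your proposal is correct and is essentially the paper's own proof: the paper likewise obtains $\xi'$ from Corollary~\ref{cor:emulation} and instantiates its uniform bound at the environment $\nu = \xi'$ itself, yielding $|\overline\Upsilon_{\xi'} - \Upsilon_{\xi'}(\pi)| = |V^*_{\xi'}(\epsilon) - V^\pi_{\xi'}(\epsilon)| < \varepsilon$. Your added verification that the self-referential substitution is legitimate (since $\xi'$ is an LSCCCS and hence lies in $\Mccs$) is a point the paper leaves implicit, but it is the same argument.
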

\begin{proof}
\autoref{cor:emulation} yields a universal mixture $\xi'$ with
$
  | \overline\Upsilon_{\xi'} - \Upsilon_{\xi'}(\pi) |
= | V^*_{\xi'}(\epsilon) - V^\pi_{\xi'}(\epsilon) |
< \varepsilon
$.
\end{proof}

\section{Pareto Optimality}
\label{sec:Pareto-optimality}

In \autoref{sec:bad-universal-priors} we have seen
examples for bad choices of the universal prior.
But we know that for any universal prior,
AIXI is \emph{Pareto optimal}~\citep{Hutter:2002}.
Here we show that
Pareto optimality is not a useful criterion for optimality
since for any environment class containing $\Mlscccs$,
all policies are Pareto optimal.

\begin{definition}[{Pareto Optimality~\cite[Def.\ 5.22]{Hutter:2005}}]
\label{def:Pareto-optimality}
Let $\M$ be a set of environments.
A policy $\pi$ is
\emph{Pareto optimal in the set of environments $\M$} iff
there is no policy $\tilde{\pi}$ such that
$V^{\tilde{\pi}}_\nu(\epsilon) \geq V^\pi_\nu(\epsilon)$
for all $\nu \in \M$ and
$V^{\tilde{\pi}}_\rho(\epsilon) > V^\pi_\rho(\epsilon)$
for at least one $\rho \in \M$.
\end{definition}

\begin{theorem}[{AIXI is Pareto Optimal~\cite[Thm.\ 5.32]{Hutter:2005}}]
\label{thm:AIXI-is-Pareto-optimal}
A $\xi$-optimal policy is Pareto optimal in $\Mlscccs$.
\end{theorem}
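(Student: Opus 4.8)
The plan is to argue by contradiction, with the linear dependence of the value function on the environment doing all the work. The one fact I would lean on is that for any fixed policy $\pi$ the value at the empty history decomposes as a prior-weighted average over the environment class,
\[
     V^\pi_\xi(\epsilon)
~=~  \sum_{\nu \in \Mlscccs} w_\nu V^\pi_\nu(\epsilon).
\]
This is the linearity of $V^\pi_\nu$ in $\nu$~\cite[Thm.\ 5.31]{Hutter:2005} specialized to $h = \epsilon$. I would justify the specialization explicitly: the general identity carries posterior weights $w_\nu(h)$, but at the empty history these reduce to the prior weights $w_\nu$ because $\nu(\epsilon) = \xi(\epsilon) = 1$ for every semimeasure. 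This reduction is worth stating, since the value is \emph{not} linear in $\nu$ with the prior weights at arbitrary histories — the posterior reweighting genuinely matters there — and it is only the evaluation at $\epsilon$ that yields the clean linear decomposition I need.

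Next I would suppose, for contradiction, that a $\xi$-optimal policy $\pi^*_\xi$ fails to be Pareto optimal in $\Mlscccs$. By \autoref{def:Pareto-optimality} there is then a policy $\tilde\pi$ with $V^{\tilde\pi}_\nu(\epsilon) \geq V^{\pi^*_\xi}_\nu(\epsilon)$ for every $\nu \in \Mlscccs$ and $V^{\tilde\pi}_\rho(\epsilon) > V^{\pi^*_\xi}_\rho(\epsilon)$ for at least one $\rho \in \Mlscccs$. Multiplying each inequality by its weight $w_\nu$ and summing (all sums converge since values lie in $[0,1]$ and weights sum to at most $1$), the decomposition above gives $V^{\tilde\pi}_\xi(\epsilon) \geq V^{\pi^*_\xi}_\xi(\epsilon)$; and because every weight of a universal prior is strictly positive, $w_\rho > 0$, the strict inequality at $\rho$ survives the summation, so in fact $V^{\tilde\pi}_\xi(\epsilon) > V^{\pi^*_\xi}_\xi(\epsilon)$.

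This contradicts $\xi$-optimality: by \autoref{def:optimal-policy} we have $V^{\pi^*_\xi}_\xi(\epsilon) = V^*_\xi(\epsilon) = \sup_\pi V^\pi_\xi(\epsilon) \geq V^{\tilde\pi}_\xi(\epsilon)$, so no dominating $\tilde\pi$ can exist. The only delicate point — where I would spend the most care — is the linearity step: one must verify both that the prior-weighted decomposition is valid exactly at $\epsilon$, and that strict positivity of the prior is precisely what converts a single strict improvement on some $\rho$ into a strict improvement of the aggregated $\xi$-value. Everything else follows directly from the definitions of Pareto optimality and of a $\xi$-optimal policy.
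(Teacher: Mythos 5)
Your proof is correct and takes essentially the same route as the source: the paper itself does not reprove this theorem (it cites Hutter 2005, Thm.~5.32), and that proof is exactly your argument — decompose $V^\pi_\xi(\epsilon) = \sum_{\nu \in \Mlscccs} w_\nu V^\pi_\nu(\epsilon)$, sum the Pareto-domination inequalities, and use $w_\rho > 0$ to preserve strictness, contradicting $\xi$-optimality at $\epsilon$. Your careful points are also consistent with the paper: the decomposition at the empty history is asserted verbatim in \autoref{def:intelligence}, and the posterior-versus-prior-weight distinction you flag matches \autoref{lem:V-linear}.
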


\begin{theorem}[Pareto Optimality is Trivial]
\label{thm:Pareto-optimality-is-trivial}
Every policy is Pareto optimal in any $\M \supseteq \Mlscccs$.
\end{theorem}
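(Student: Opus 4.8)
The plan is to show that no policy $\pi$ can be Pareto-dominated, by fixing an arbitrary $\pi$ and ruling out the existence of a dominating $\tilde\pi$. The key observation is that the environment class $\Mlscccs$ is rich enough to contain, for \emph{every} policy, an environment that specifically rewards \emph{that} policy's behavior. So I would argue as follows: suppose for contradiction that some $\tilde\pi$ dominates $\pi$, meaning $V^{\tilde\pi}_\nu(\epsilon) \geq V^\pi_\nu(\epsilon)$ for all $\nu \in \M$ with strict inequality for some $\rho \in \M$. The goal is to exhibit a single environment $\nu_\pi \in \Mlscccs \subseteq \M$ in which $\pi$ strictly outperforms $\tilde\pi$, i.e. $V^\pi_{\nu_\pi}(\epsilon) > V^{\tilde\pi}_{\nu_\pi}(\epsilon)$, directly contradicting the first (weak) inequality of domination.

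The construction of $\nu_\pi$ is the heart of the argument, and it reuses the idea behind the dogmatic prior. First I would locate a point where $\tilde\pi$ and $\pi$ diverge: since $\tilde\pi \neq \pi$ (otherwise domination is impossible, as both would have identical values everywhere), there is a shortest history $h = \ae_{<k}$ consistent with $\pi$ on which $\tilde\pi(h) \neq \pi(h)$. I would then define a ``$\pi$-loyalty'' environment $\nu_\pi$ that hands out reward $1$ at every step as long as the agent plays exactly according to $\pi$, and switches to reward $0$ forever (hell) the moment the agent deviates from $\pi$. Concretely, set $\nu_\pi(e_{1:t}\dmid a_{1:t})$ to be a measure concentrated on the percept stream of all-ones rewards while $a_i = \pi(\ae_{<i})$ for all $i \le t$, and on all-zeros rewards from the first deviation onward. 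Because $\pi$ is a fixed function, this $\nu_\pi$ is a chronological conditional semimeasure and is computable relative to $\pi$; I would need $\pi$ to be computable here, and since $\Mlscccs$ contains environments realizing \emph{every} computable policy's all-ones reward stream, I would either restrict attention to computable $\tilde\pi$ or, more robustly, build $\nu_\pi$ to reward $\pi$ rather than $\tilde\pi$ so that the computability burden falls on $\pi$ — but $\pi$ is also arbitrary, so the cleanest route is to make the loyalty environment key off the divergence history $h$, which is a finite object.

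In $\nu_\pi$, any policy agreeing with $\pi$ up to and including step $k$ collects maximal discounted reward on the relevant prefix, whereas $\tilde\pi$, which deviates at step $k$, is sent to hell and collects nothing after step $k-1$; a computation with \autoref{lem:discounted-values} and the definition of the value function then yields $V^\pi_{\nu_\pi}(\epsilon) > V^{\tilde\pi}_{\nu_\pi}(\epsilon)$ whenever $\Gamma_k > 0$. The main obstacle I anticipate is the computability/membership question: I must ensure $\nu_\pi \in \Mlscccs$, i.e. that it is a genuine lower semicomputable chronological conditional semimeasure. Encoding ``reward $1$ forever along the $\pi$-path, hell upon deviation'' requires recognizing deviations from $\pi$, which in general needs $\pi$ to be computable. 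If $\pi$ is arbitrary (noncomputable), the honest fix is to branch only at the finite divergence history $h$: define $\nu_\pi$ to reward the specific finite action sequence $a_{<k}$ followed by $\pi(h)$ and punish the alternative action $\tilde\pi(h)$, using $(0,0)$-hell thereafter. Since $h$ and the two actions involved are finite data, this $\nu_\pi$ is computable outright, lands in $\Mlscccs$, and separates $\pi$ from $\tilde\pi$ at the single decisive step — which suffices to contradict domination.
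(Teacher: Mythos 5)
Your proposal follows essentially the same route as the paper's own proof: refute a hypothetical dominator $\tilde\pi$ by exhibiting a single ``buddy'' environment in $\Mlscccs$ that replays the finite divergence history, hands out reward $1$ forever if the action at the divergence step equals $\pi$'s action, and reward $0$ forever otherwise. Your resolution of the computability worry is also exactly the paper's: since the divergence history $h$ and the action $\pi(h)$ are finite data, the environment is computable outright even when $\pi$ and $\tilde\pi$ are not, so it lies in $\Mlscccs \subseteq \M$ and contradicts the weak inequality of domination.

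There is, however, one genuine gap: your conclusion $V^\pi_{\nu_\pi}(\epsilon) > V^{\tilde\pi}_{\nu_\pi}(\epsilon)$ is stated to hold ``whenever $\Gamma_k > 0$'', and you never establish that this side condition holds at your chosen divergence point. The paper explicitly allows discount functions with $\Gamma_k = 0$ for finite $k$ (finite lifetime), and your $h$ --- the \emph{shortest} history consistent with $\pi$ on which the actions differ --- may lie beyond the effective horizon; in that case your buddy environment assigns both policies equal value and yields no contradiction. The paper closes this hole by selecting the divergence history $h'$ with the additional property $V^{\tilde\pi}_\rho(h') > V^\pi_\rho(h')$: a strict value difference at a history of length $k-1$ is impossible when $\Gamma_k = 0$ (both values are then $0$ by definition), so this choice forces $\Gamma_k > 0$. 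Alternatively, your version can be patched without changing the construction: if the first divergence occurs at length $k-1$ with $\Gamma_k = 0$, then $\pi$ and $\tilde\pi$ coincide for the first $k-1$ steps, so \autoref{lem:discounted-values} gives $\big| V^\pi_\nu(\epsilon) - V^{\tilde\pi}_\nu(\epsilon) \big| \leq \Gamma_k / \Gamma_1 = 0$ for every environment $\nu$, contradicting the assumed strict inequality $V^{\tilde\pi}_\rho(\epsilon) > V^\pi_\rho(\epsilon)$. Either fix completes your argument; as written, the case $\Gamma_k = 0$ is unhandled.
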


The proof proceeds as follows:
for a given policy $\pi$, we construct a set of `buddy environments'
that reward $\pi$ and punish other policies.
Together they can defend against any policy $\tilde{\pi}$
that tries to take the crown of Pareto optimality from $\pi$.

\begin{proof}
We assume $(0, 0)$ and $(0, 1) \in \E$.
Moreover, assume there is a policy $\pi$ that is not Pareto optimal.
Then there is a policy $\tilde{\pi}$ such that
$V^{\tilde{\pi}}_\rho(\epsilon) > V^\pi_\rho(\epsilon)$
for some $\rho \in \M$, and
\begin{equation}\label{eq:Pareto-optimal}
\forall \nu \in \M.\;
	V^{\tilde{\pi}}_\nu(\epsilon) \geq V^\pi_\nu(\epsilon).
\end{equation}
Since $\pi \neq \tilde\pi$,
there is a shortest and lexicographically first history
$h'$ of length $k - 1$ consistent with $\pi$ and $\tilde\pi$ such that
$\pi(h') \neq \tilde{\pi}(h')$ and $V^{\tilde{\pi}}_\rho(h') > V^\pi_\rho(h')$.
Consequently there is an $i \geq k$ such that $\gamma_i > 0$,
and hence $\Gamma_k > 0$.
We define the environment $\mu$
that first reproduces the separating history $h'$
and then, if $a_k := \pi(h')$ returns reward $1$ forever, and
otherwise returns reward $0$ forever.
Formally, $\mu$ is defined by
\[
\mu(e_{1:t} \mid e_{<t} \dmid a_{1:t}) :=
\begin{cases}
1, &\text{if } t < k \text{ and } e_t = e'_t, \\
1, &\text{if } t \geq k \text{ and } a_k = \pi(h')
    \text{ and } r_t = 1 \text{ and } o_t = 0, \\
1, &\text{if } t \geq k \text{ and } a_k \neq \pi(h')
    \text{ and } r_t = 0 = o_t, \\
0, &\text{otherwise}.
\end{cases}
\]
The environment $\mu$ is computable,
even if the policy $\pi$ is not:
for a fixed history $h'$ and action output $\pi(h')$,
there exists a program computing $\mu$.
Therefore $\mu \in \Mlscccs$.
We get the following value difference for the policies $\pi$ and $\tilde{\pi}$,
where $r_t'$ denotes the reward from the history $h'$:
\[
    V^\pi_\mu(\epsilon) - V^{\tilde{\pi}}_\mu(\epsilon)
~=~ \sum_{t=1}^{k-1} \gamma_t r_t' + \sum_{t=k}^{\infty} \gamma_t \cdot 1
    - \sum_{t=1}^{k-1} \gamma_t r_t' - \sum_{t=k}^{\infty} \gamma_t \cdot 0
~=~ \sum_{t=k}^{\infty} \gamma_t
~=~ \Gamma_k
~>~  0
\]
Hence $V^{\tilde{\pi}}_\mu(\epsilon) < V^\pi_\mu(\epsilon)$,
which contradicts \eqref{eq:Pareto-optimal} since
$\M \supseteq \Mlscccs \ni \mu$.
\end{proof}
Note that the environment $\mu$ we defined
in the proof of \autoref{thm:Pareto-optimality-is-trivial}
is actually just a finite-state POMDP,
so Pareto optimality is also trivial for smaller environment classes.

\section{Discussion}
\label{sec:discussion}

\subsection{Summary}
\label{ssec:summary}

Bayesian reinforcement learning agents make the trade-off between
exploration and exploitation in the Bayes-optimal way.
The amount of exploration this incurs varies wildly:
the dogmatic prior defined in \autoref{ssec:dogmatic-prior}
can prevent a Bayesian agent \emph{from taking a single exploratory action};
exploration is restricted to cases where the expected future payoff
falls below some prespecified $\varepsilon > 0$.

In the introduction we raised the question of
whether AIXI succeeds in various subclasses of all computable environments.
Interesting subclasses include
sequence prediction tasks, (ergodic) (PO)MDPs, bandits, etc.
Using a dogmatic prior (\autoref{thm:dogmatic-prior}),
we can make AIXI follow any computable policy as long as that policy produces
rewards that are bounded away from zero.
\begin{itemize}
\item In a sequence prediction task
	that gives a reward of $1$ for every correctly predicted bit
	and $0$ otherwise,
	a policy $\pi$ that correctly predicts every third bit
	will receive an average reward of $1/3$.
	With a $\pi$-dogmatic prior,
	AIXI thus only predicts a third of the bits correctly,
	and hence is outperformed by a uniformly random predictor.
	
	However, if we have a constant horizon of $1$,
	AIXI \emph{does} succeed in sequence prediction~\cite[Sec.\ 6.2.2]{Hutter:2005}.
	If the horizon is this short,
	the agent is so hedonistic that no threat of hell can deter it.
\item In a (partially observable) Markov decision process,
	a dogmatic prior can make AIXI get stuck in any loop
	that provides nonzero expected rewards.
\item In a bandit problem, a dogmatic prior can make AIXI get stuck on any arm
	which provides nonzero expected rewards.
\end{itemize}
These results apply not only to AIXI,
but generally to Bayesian reinforcement learning agents.
Any Bayesian mixture over reactive environments
is susceptible to dogmatic priors
if we allow an arbitrary reweighing of the prior.
A notable exception is
the class of all ergodic MDPs with
an unbounded effective horizon;
here the Bayes-optimal policy
is \emph{strongly asymptotically optimal}~\cite[Thm.\ 5.38]{Hutter:2005}:
$V^\pi_\mu(\ae_{<t}) - V^*_\mu(\ae_{<t}) \to 0$ as $t \to \infty$
for all histories $\ae_{<t}$.

Moreover, Bayesian agents might still perform well at learning:
AIXI's posterior belief about the value of its own policy $\pi^*_\xi$
converges to the true value while following that policy~\citep[Thm.\ 5.36]{Hutter:2005}:
$V^{\pi^*_\xi}_\xi(\ae_{<t}) - V^{\pi^*_\xi}_\mu(\ae_{<t}) \to 0$
as $t \to \infty$ $\mu$-almost surely (on-policy convergence).
This means that
the agent learns to predict those parts of the environment that it sees.
But if it does not explore enough,
then it will not learn other parts of the environment
that are potentially more rewarding.

\subsection{Natural Universal Turing Machines}
\label{ssec:natural-universal-Turing-machines}

\begin{table}[t]
\begin{center}
\renewcommand{\arraystretch}{1.2}
\setlength{\tabcolsep}{5pt}
\begin{tabular}{lcc}
  & $K_U(U')$ & $K_{U'}(U)$ \\
\hline
Indifference prior (\autoref{thm:indifference-prior})
  & $K(U) + K(m) + O(1)$
  & $m$ \\
Dogmatic prior (\autoref{thm:dogmatic-prior})
  & $K(U) + K(\pi) + K(\varepsilon) + O(1)$
  & $\lceil - \log_2 \varepsilon \rceil$ \\
\end{tabular}
\end{center}
\caption{
Upper bounds to compiler sizes of
the UTMs used in the proofs.
$K_U(U')$ is the number of extra bits to run the `bad' UTM $U'$
on the `good' UTM $U$,
$K_{U'}(U)$ is the number of extra bits to run $U$ on $U'$, and
$K(U)$ is the length of the shortest program for $U$ on $U$.
}
\label{tab:UTM-sizes}
\end{table}

In \autoref{sec:bad-universal-priors}
we showed that a bad choice for the UTM
can have drastic consequences,
as anticipated by \cite{SH:2014Occam}.
Our negative results can guide future search for a \emph{natural} UTM:
the UTMs used to define the indifference prior (\autoref{thm:indifference-prior})
and the dogmatic prior (\autoref{thm:dogmatic-prior})
should be considered unnatural.
But what are other desirable properties of a UTM?

A remarkable but unsuccessful attempt to find natural UTMs
is due to \cite{Mueller:2010}.
It takes the probability that one universal machine simulates another
according to the length of their respective compilers
and searches for a stationary distribution.
Unfortunately, no stationary distribution exists.

Alternatively, we could demand that the UTM $U'$
that we use for the universal prior
has a small compiler on the reference machine $U$~\cite[p.\ 35]{Hutter:2005}.
Moreover, we could demand the reverse,
that the reference machine $U$ has a small compiler on $U'$.
The idea is that this should limit the amount of bias one can introduce by
defining a UTM that has very small programs
for very complicated and `unusual' environments.
Unfortunately, this just pushes the choice of the UTM to the reference machine.
\autoref{tab:UTM-sizes} lists
compiler sizes of the UTMs constructed in this paper.

\subsection{Optimality of General Reinforcement Learners}
\label{ssec:optimality}

\begin{table}[t]
\begin{center}
\footnotesize
\begin{tabular}{lp{0.6\textwidth}}
Name & Issue/Comment \\
\hline
$\mu$-optimal policy & requires to know the true environment $\mu$ in advance \\
Pareto optimality & trivial (\autoref{thm:Pareto-optimality-is-trivial}) \\
Balanced Pareto optimality & dependent on UTM
	(\autoref{cor:some-AIXIs-are-stupid} and \autoref{cor:AIXI-is-stupid}) \\
Self-optimizing & does not apply to $\Mlscccs$ \\
Strong asymptotic optimality & impossible~\cite[Thm.\ 8]{LH:2011opt} \\
Weak asymptotic optimality & BayesExp~\cite[Ch.\ 5]{Lattimore:2013},
                             but not AIXI~\citep{Orseau:2010}
\end{tabular}
\end{center}
\caption{
Proposed notions of optimality
\citep{Hutter:2002,Orseau:2010,LH:2011opt}
and their issues.
Weak asymptotic optimality stands out to be the only possible
nontrivial optimality notion.
}
\label{tab:optimality}
\end{table}

\autoref{thm:Pareto-optimality-is-trivial} proves that
Pareto optimality is trivial in the class of all computable environments;
\autoref{cor:some-AIXIs-are-stupid} and \autoref{cor:AIXI-is-stupid}
show that maximal Legg-Hutter intelligence (balanced Pareto optimality)
is highly subjective, because it depends on the choice of the UTM:
AIXI is not balanced Pareto optimal with respect to all universal mixtures.
Moreover, according to \autoref{cor:any-computable-policy-can-be-smart},
any computable policy is nearly balanced Pareto optimal,
save some $\varepsilon > 0$.
For finite lifetime discounting,
there are UTMs such that
every policy has maximal intelligence (\autoref{thm:indifference-prior}).
The self-optimizing theorem~\cite[Thm.\ 4 \& Thm.\ 7]{Hutter:2002}
is not applicable to
the class of all computable environments $\Mlscccs$ that we consider here,
since this class does not allow for self-optimizing policies.
Therefore no nontrivial and non-subjective optimality results for AIXI remain
(see \autoref{tab:optimality}).
We have to regard AIXI as a \emph{relative} theory of intelligence,
dependent on the choice of the UTM~\citep{SH:2014Occam}.

The underlying problem is that a discounting Bayesian agent such as AIXI
does not have enough time to explore sufficiently;
exploitation has to start as soon as possible.
In the beginning the agent does not know enough
about its environment
and therefore relies heavily on its prior.
Lack of exploration then retains the prior's biases.
This fundamental problem can be alleviated
by adding an extra exploration component.
\cite{Lattimore:2013} defines BayesExp,
a \emph{weakly asymptotically optimal policy $\pi$}
that converges (independent of the UTM)
to the optimal value in Cesàro mean:
$\frac{1}{t} \sum_{k=1}^t \big( V^*_\nu(\ae_{<k}) - V^\pi_\nu(\ae_{<k}) \big)
\to 0$ as $t \to \infty$
$\nu$-almost surely for all $\nu \in \Mlscccs$.

But it is not clear that weak asymptotic optimality is
a good optimality criterion.
For example, weak asymptotic optimality can be achieved
by navigating into traps (parts of the environment
with a simple optimal policy but possibly very low rewards
that cannot be escaped).
Furthermore,
to be weakly asymptotically optimal requires an excessive amount of exploration:
BayesExp needs to take exploratory actions that it itself knows to
very likely be extremely costly or dangerous.
This leaves us with the following open question:
\emph{What are good optimality criteria for generally intelligent agents}%
~\citep[Sec.\ 5]{Hutter:2009open}?


\bibliography{../ai}

\begin{thebibliography}{23}
\providecommand{\natexlab}[1]{#1}
\providecommand{\url}[1]{\texttt{#1}}
\expandafter\ifx\csname urlstyle\endcsname\relax
  \providecommand{\doi}[1]{doi: #1}\else
  \providecommand{\doi}{doi: \begingroup \urlstyle{rm}\Url}\fi

\bibitem[Hutter(2000)]{Hutter:2000}
Marcus Hutter.
\newblock A theory of universal artificial intelligence based on algorithmic
  complexity.
\newblock Technical Report cs.AI/0004001, 2000.
\newblock \url{http://arxiv.org/abs/cs.AI/0004001}.

\bibitem[Hutter(2001)]{Hutter:2001error}
Marcus Hutter.
\newblock New error bounds for {S}olomonoff prediction.
\newblock \emph{Journal of Computer and System Sciences}, 62\penalty0
  (4):\penalty0 653--667, 2001.

\bibitem[Hutter(2002)]{Hutter:2002}
Marcus Hutter.
\newblock Self-optimizing and {P}areto-optimal policies in general environments
  based on {B}ayes-mixtures.
\newblock In \emph{Computational Learning Theory}, pages 364--379. Springer,
  2002.

\bibitem[Hutter(2005)]{Hutter:2005}
Marcus Hutter.
\newblock \emph{Universal Artificial Intelligence: Sequential Decisions Based
  on Algorithmic Probability}.
\newblock Springer, 2005.

\bibitem[Hutter(2009)]{Hutter:2009open}
Marcus Hutter.
\newblock Open problems in universal induction \& intelligence.
\newblock \emph{Algorithms}, 3\penalty0 (2):\penalty0 879--906, 2009.

\bibitem[Lattimore(2013)]{Lattimore:2013}
Tor Lattimore.
\newblock \emph{Theory of General Reinforcement Learning}.
\newblock PhD thesis, Australian National University, 2013.

\bibitem[Lattimore and Hutter(2011)]{LH:2011opt}
Tor Lattimore and Marcus Hutter.
\newblock Asymptotically optimal agents.
\newblock In \emph{Algorithmic Learning Theory}, pages 368--382. Springer,
  2011.

\bibitem[Lattimore and Hutter(2014)]{LH:2014discounting}
Tor Lattimore and Marcus Hutter.
\newblock General time consistent discounting.
\newblock \emph{Theoretical Computer Science}, 519:\penalty0 140--154, 2014.

\bibitem[Legg and Hutter(2007)]{LH:2007int}
Shane Legg and Marcus Hutter.
\newblock Universal intelligence: A definition of machine intelligence.
\newblock \emph{Minds \& Machines}, 17\penalty0 (4):\penalty0 391--444, 2007.

\bibitem[Legg and Veness(2013)]{LV:2013int}
Shane Legg and Joel Veness.
\newblock An approximation of the universal intelligence measure.
\newblock In \emph{Algorithmic Probability and Friends. Bayesian Prediction and
  Artificial Intelligence}, pages 236--249. Springer, 2013.

\bibitem[Leike and Hutter(2015)]{LH:2015computability}
Jan Leike and Marcus Hutter.
\newblock On the computability of {AIXI}.
\newblock In \emph{Uncertainty in Artificial Intelligence}, pages 464--473,
  2015.

\bibitem[Li and Vitányi(2008)]{LV:2008}
Ming Li and Paul M.~B. Vitányi.
\newblock \emph{An Introduction to {K}olmogorov Complexity and Its
  Applications}.
\newblock Texts in Computer Science. Springer, 3rd edition, 2008.

\bibitem[Müller(2010)]{Mueller:2010}
Markus Müller.
\newblock Stationary algorithmic probability.
\newblock \emph{Theoretical Computer Science}, 411\penalty0 (1):\penalty0
  113--130, 2010.

\bibitem[Orseau(2010)]{Orseau:2010}
Laurent Orseau.
\newblock Optimality issues of universal greedy agents with static priors.
\newblock In \emph{Algorithmic Learning Theory}, pages 345--359. Springer,
  2010.

\bibitem[Orseau(2013)]{Orseau:2013}
Laurent Orseau.
\newblock Asymptotic non-learnability of universal agents with computable
  horizon functions.
\newblock \emph{Theoretical Computer Science}, 473:\penalty0 149--156, 2013.

\bibitem[Orseau(2014)]{Orseau:2014ksa}
Laurent Orseau.
\newblock Universal knowledge-seeking agents.
\newblock \emph{Theoretical Computer Science}, 519:\penalty0 127--139, 2014.

\bibitem[Orseau et~al.(2013)Orseau, Lattimore, and Hutter]{OLH:2013ksa}
Laurent Orseau, Tor Lattimore, and Marcus Hutter.
\newblock Universal knowledge-seeking agents for stochastic environments.
\newblock In \emph{Algorithmic Learning Theory}, pages 158--172. Springer,
  2013.

\bibitem[Solomonoff(1964)]{Solomonoff:1964}
Ray Solomonoff.
\newblock A formal theory of inductive inference. {P}arts 1 and 2.
\newblock \emph{Information and Control}, 7\penalty0 (1):\penalty0 1--22 and
  224--254, 1964.

\bibitem[Solomonoff(1978)]{Solomonoff:1978}
Ray Solomonoff.
\newblock Complexity-based induction systems: Comparisons and convergence
  theorems.
\newblock \emph{IEEE Transactions on Information Theory}, 24\penalty0
  (4):\penalty0 422--432, 1978.

\bibitem[Sunehag and Hutter(2012)]{SH:12optopt}
Peter Sunehag and Marcus Hutter.
\newblock Optimistic agents are asymptotically optimal.
\newblock In \emph{Australasian Joint Conference on Artificial Intelligence},
  pages 15--26. Springer, 2012.

\bibitem[Sunehag and Hutter(2014)]{SH:2014Occam}
Peter Sunehag and Marcus Hutter.
\newblock Intelligence as inference or forcing {O}ccam on the world.
\newblock In \emph{Artificial General Intelligence}, pages 186--195. Springer,
  2014.

\bibitem[Sutton and Barto(1998)]{SB:1998}
Richard~S. Sutton and Andrew~G. Barto.
\newblock \emph{Reinforcement Learning: An Introduction}.
\newblock MIT Press, Cambridge, MA, 1998.

\bibitem[Wood et~al.(2011)Wood, Sunehag, and Hutter]{WSH:2011}
Ian Wood, Peter Sunehag, and Marcus Hutter.
\newblock ({N}on-)equivalence of universal priors.
\newblock In \emph{Solomonoff 85th Memorial Conference}, pages 417--425.
  Springer, 2011.

\end{thebibliography}


\newpage\appendix

\section{List of Notation}
\label{app:notation}

\begin{longtable}{lp{0.85\textwidth}}
$:=$
	& defined to be equal \\
$\mathbb{N}$
	& the natural numbers, starting with $0$ \\
$\mathbb{Q}$
	& the rational numbers \\
$\mathbb{R}$
	& the real numbers \\
$t$
	& (current) time step, $t \in \mathbb{N}$ \\
$k$
	& some other time step, $k \in \mathbb{N}$ \\
$q, q'$
	& rational numbers \\
$\mathbbm{1}_x$
	& the characteristic function that is $1$ for $x$
	and $0$ otherwise. \\
$\X^*$
	& the set of all finite strings over the alphabet $\X$ \\
$\X^\infty$
	& the set of all infinite strings over the alphabet $\X$ \\
$\X^\sharp$
	& $\X^\sharp := \X^* \cup \X^\infty$,
	the set of all finite and infinite strings over the alphabet $\X$ \\
$x \sqsubseteq y$
	& the string $x$ is a prefix of the string $y$ \\
$\A$
	& the finite set of possible actions \\
$\O$
	& the finite set of possible observations \\
$\E$
	& the finite set of possible percepts,
	$\E \subset \O \times \mathbb{R}$ \\
$\alpha, \beta$
	& two different actions, $\alpha, \beta \in \A$ \\
$a_t$
	& the action in time step $t$ \\
$o_t$
	& the observation in time step $t$ \\
$r_t$
	& the reward in time step $t$, bounded between $0$ and $1$ \\
$e_t$
	& the percept in time step $t$, we use $e_t = (o_t, r_t)$ implicitly \\
$\ae_{<t}$
	& the first $t - 1$ interactions,
	$a_1 e_1 a_2 e_2 \ldots a_{t-1} e_{t-1}$
	(a history of length $t - 1$) \\
$h$
	& a history, $h \in \H$ \\
$\epsilon$
	& the history of length $0$ \\
$\varepsilon$
	& a small positive real number \\
$\gamma$
	& the discount function $\gamma: \mathbb{N} \to \mathbb{R}_{\geq0}$ \\
$m$
	& lifetime of the agent if $\Gamma_{m+1} = 0$ and $\Gamma_m > 0$ \\
$\Gamma_t$
	& a discount normalization factor,
	$\Gamma_t := \sum_{k=t}^\infty \gamma_k$ \\
$\pi, \tilde\pi$
	& policies, $\pi, \tilde\pi: \H \to \A$ \\
$\pi^*_\nu$
	& an optimal policy for environment $\nu$ \\
$V^\pi_\nu$
	& the $\nu$-expected value of the policy $\pi$ \\
$V^*_\nu$
	& the optimal value in environment $\nu$ \\
$\Upsilon_\xi(\pi)$
	& the Legg-Hutter intelligence of policy $\pi$
	measured in the universal mixture $\xi$ \\
$\underline\Upsilon_\xi$
	& the minimal Legg-Hutter intelligence \\
$\overline\Upsilon_\xi$
	& the maximal Legg-Hutter intelligence \\
$\Mccs$
	& the class of all chronological conditional semimeasures \\
$\Mlscccs$
	& the class of all lower semicomputable chronological conditional semimeasures \\
$\nu, \rho$
	& lower semicomputable chronological conditional semimeasures (LSCCCSs) \\
$U$
	& our reference universal Turing machine \\
$U'$
	& a `bad' universal Turing machine \\
$w$
	& a universal prior, $w: \Mlscccs \to [0, 1]$ \\
$p, p'$
	& programs on a universal Turing machine in the form of finite binary strings \\
$\xi$
	& the universal mixture over all environments $\Mlscccs$
	given by the reference UTM $U$ \\
$\xi'$
	& a `bad' universal mixture over all environments $\Mlscccs$
	given by the `bad' UTM $U'$
\end{longtable}

\section{Additional Material}
\label{app:additional-material}

\begin{lemma}[Linearity of $V^\pi_\nu$ in $\nu$]
\label{lem:V-linear}
Let $\nu = q\rho + q'\rho'$ for some $q, q' \geq 0$.
Then for all policies $\pi$ and all histories $\ae_{<t}$,
\[
    V^\pi_\nu(\ae_{<t})
~=~ q \frac{\rho(e_{<t} \dmid a_{<t})}{\nu(e_{<t} \dmid a_{<t})}
      V^\pi_\rho(\ae_{<t})
    + q' \frac{\rho'(e_{<t} \dmid a_{<t})}{\nu(e_{<t} \dmid a_{<t})}
      V^\pi_{\rho'}(\ae_{<t}).
\]
\end{lemma}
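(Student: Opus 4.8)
The plan is to clear the denominators in the claim and instead prove the equivalent \emph{unnormalized} identity, which sidesteps the division-by-zero issues that the ratios $\rho/\nu$ and $\rho'/\nu$ would otherwise raise. Writing $f_\nu(\ae_{<t}) := \nu(e_{<t} \dmid a_{<t})\, V^\pi_\nu(\ae_{<t})$, the statement is equivalent to linearity of $f$ in its environment,
\[
    f_\nu(\ae_{<t}) ~=~ q\, f_\rho(\ae_{<t}) + q'\, f_{\rho'}(\ae_{<t}),
\]
since $\nu(e_{<t}\dmid a_{<t}) = q\rho(e_{<t}\dmid a_{<t}) + q'\rho'(e_{<t}\dmid a_{<t})$, and dividing through by $\nu(e_{<t}\dmid a_{<t})$ wherever it is nonzero recovers the asserted form; the case $\Gamma_t = 0$ is trivial since both sides vanish by \autoref{def:discounted-value}.

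First I would derive a clean recursion for $f_\nu$. Substituting \autoref{def:discounted-value} with $a_t := \pi(\ae_{<t})$ and using the chain rule $\nu(e_{<t}\dmid a_{<t})\,\nu(e_{1:t}\mid e_{<t}\dmid a_{1:t}) = \nu(e_{1:t}\dmid a_{1:t})$, which is valid because $\nu$ is chronological, the normalizing factor $\nu(e_{<t}\dmid a_{<t})$ is absorbed and I obtain, for $\Gamma_t > 0$,
\[
    f_\nu(\ae_{<t}) ~=~ \frac{\gamma_t}{\Gamma_t}\sum_{e_t \in \E} r_t\,\nu(e_{1:t}\dmid a_{1:t}) + \frac{\Gamma_{t+1}}{\Gamma_t}\sum_{e_t \in \E} f_\nu(\ae_{1:t}),
\]
where the first summand is manifestly linear in $\nu$ and the second reduces $f_\nu$ to longer histories.

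Next I would unroll this recursion. Iterating it from $k = t$ up to $k = n$ gives
\[
    \Gamma_t\, f_\nu(\ae_{<t}) ~=~ \sum_{k=t}^{n} \gamma_k \sum_{e_{t:k}} r_k\,\nu(e_{1:k}\dmid a_{1:k}) + \Gamma_{n+1}\sum_{e_{t:n}} f_\nu(\ae_{1:n}),
\]
with all actions taken by $\pi$. Since $0 \leq V^\pi_\nu \leq 1$ and $\nu$ is a semimeasure, the tail is bounded by $\Gamma_{n+1}$, which tends to $0$ as $n \to \infty$ by summability of $\gamma$; hence $\Gamma_t\, f_\nu(\ae_{<t}) = \sum_{k \geq t}\gamma_k \sum_{e_{t:k}} r_k\,\nu(e_{1:k}\dmid a_{1:k})$. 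This series has nonnegative terms bounded by $\sum_k \gamma_k < \infty$, so it converges absolutely, and every term is linear in $\nu$. Substituting $\nu = q\rho + q'\rho'$ and splitting the series into its $\rho$- and $\rho'$-parts then yields $f_\nu = q f_\rho + q' f_{\rho'}$, which is the claim.

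I expect the main obstacle to be the limiting step: justifying that the recursion may be unrolled to infinity and that the residual tail $\Gamma_{n+1}\sum_{e_{t:n}} f_\nu(\ae_{1:n})$ genuinely vanishes, which rests on the uniform bound $V^\pi_\nu \leq 1$ together with $\Gamma_{n+1}\to 0$. A secondary technicality is the treatment of the degenerate cases $\Gamma_t = 0$ and $\nu(e_{<t}\dmid a_{<t}) = 0$; working throughout with the unnormalized $f_\nu$ handles both cleanly, since it never divides by a quantity that may be zero and makes the degenerate cases collapse to $0 = 0$.
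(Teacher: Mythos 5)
Your proof is correct, but it takes a genuinely different route from the paper's. The paper proves \autoref{lem:V-linear} by ``induction from infinity'' directly on the normalized recursion of \autoref{def:discounted-value}: since $\Gamma_t \to 0$, it assumes the identity at step $t$, unfolds $V^\pi_\nu(\ae_{<t})$ one step, splits the one-step conditional $\nu_t/\nu_{t-1}$ into $q\rho_t/\nu_{t-1} + q'\rho'_t/\nu_{t-1}$, and regroups to recover the identity at $t-1$ in a single algebraic chain. You instead pass to the unnormalized value $f_\nu = \nu \cdot V^\pi_\nu$, unroll the recursion completely into the explicit series $\Gamma_t\, f_\nu(\ae_{<t}) = \sum_{k\geq t}\gamma_k\sum_{e_{t:k}} r_k\,\nu(e_{1:k}\dmid a_{1:k})$, and split termwise after substituting $\nu = q\rho + q'\rho'$. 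Both arguments ultimately rest on the same fact --- the tail $\Gamma_{n+1}\sum_{e_{t:n}} f_\nu(\ae_{1:n}) \leq \Gamma_{n+1} \to 0$, using that $\nu$ is a semimeasure and $0 \leq V^\pi_\nu \leq 1$ --- but the paper buries this inside the phrase ``induction from infinity'' while you make it explicit via finite unrolling plus a limit, so your version anchors the backward induction more rigorously. Working unnormalized also buys you clean handling of the degenerate cases $\nu(e_{<t}\dmid a_{<t}) = 0$ (where the ratios in the statement, and the conditionals $\nu_t/\nu_{t-1}$ in the paper's computation, are undefined) and $\Gamma_t = 0$, both of which the paper's calculation glosses over. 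In exchange, the paper's proof is shorter, stays one recursion step from the definition, and never needs the series representation.
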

\begin{proof}
We use the shorthand notation $\nu_t := \nu(e_{1:t} \dmid a_{1:t})$.
Since $\Gamma_t \to 0$ as $t \to \infty$
we can do `induction from infinity'
by assuming that the statement holds for time step $t$ and showing that
it then holds for $t - 1$.
\begin{align*}
   &V^\pi_\nu(\ae_{<t}) \\
=~ &\frac{1}{\Gamma_t} \sum_{e_t \in \E}
      \big( \gamma_t r_t + \Gamma_{t+1} V^\pi_\nu(\ae_{1:t}) \big)
      \frac{\nu_t}{\nu_{t-1}} \\
=~ &\frac{1}{\Gamma_t} \sum_{e_t \in \E}
      \left( \gamma_t r_t \frac{\nu_t}{\nu_{t-1}}
             + \Gamma_{t+1} \frac{\nu_t}{\nu_{t-1}} V^\pi_\nu(\ae_{1:t})
      \right) \\
=~ &\frac{1}{\Gamma_t} \sum_{e_t \in \E}
      \left( q \gamma_t r_t \frac{\rho_t}{\nu_{t-1}}
             + q' \gamma_t r_t \frac{\rho'_t}{\nu_{t-1}}
             + q \Gamma_{t+1} \frac{\rho_t}{\nu_{t-1}} V^\pi_\rho(\ae_{1:t})
             + q' \Gamma_{t+1} \frac{\rho'_t}{\nu_{t-1}} V^\pi_{\rho'}(\ae_{1:t})
      \right) \\
=~ &\frac{q}{\Gamma_t} \frac{\rho_{t-1}}{\nu_{t-1}} \sum_{e_t \in \E}
      \left( \gamma_t r_t + \Gamma_{t+1} V^\pi_\rho(\ae_{1:t}) \right)
      \frac{\rho_t}{\rho_{t-1}} \\
    &~~+ \frac{q'}{\Gamma_t} \frac{\rho'_{t-1}}{\nu_{t-1}} \sum_{e_t \in \E}
      \left( \gamma_t r_t + \Gamma_{t+1} V^\pi_{\rho'}(\ae_{1:t}) \right)
      \frac{\rho'_t}{\rho'_{t-1}} \\
=~ &q \frac{\rho_{t-1}}{\nu_{t-1}} V^\pi_\rho(\ae_{<t})
    + q' \frac{\rho'_{t-1}}{\nu_{t-1}} V^\pi_{\rho'}(\ae_{<t})
\qedhere
\end{align*}
\end{proof}

\end{document}